%% file: main_theory_revised.tex
\pdfoutput=1
\documentclass[10pt]{article}
\usepackage[preprint]{tmlr}

\input{math_commands.tex}

\usepackage{amsmath}
\usepackage{mathtools}
\usepackage{amsthm}
\usepackage{amssymb}

\usepackage{graphicx}
\usepackage{booktabs}
\usepackage{caption}
\usepackage{algorithm}
\usepackage{algpseudocode}
\usepackage[dvipsnames]{xcolor}

\usepackage{tikz}
\usetikzlibrary{positioning,fit,calc,arrows.meta}

\usepackage{url}
\usepackage{hyperref}
\usepackage{cleveref}

\newtheorem{theorem}{Theorem}[section]
\newtheorem{lemma}[theorem]{Lemma}
\newtheorem{proposition}[theorem]{Proposition}
\newtheorem{corollary}[theorem]{Corollary}

\title{FedPCD: Priority-Constrained Federated Aggregation}

    \author{\name Yousef Irshaid\textsuperscript{1,\,*} \email b00093447@aus.edu \\
      \name Dara Varam\textsuperscript{1,\,2,\,*} \email b00081313@aus.edu \\
      \name Mohamed I. AlHajri\textsuperscript{1} \email mialhajri@aus.edu \\[0.06in]
      \addr $^1$ Department of Computer Science \& Engineering, American University of Sharjah \\
      $^2$ Senseable City Laboratory, Massachusetts Institute of Technology \\
      $^{*}$ Equal contribution.
      }

\def\month{MM}
\def\year{YYYY}
\def\openreview{\url{https://openreview.net/forum?id=XXXX}}

\begin{document}

\maketitle

\begin{abstract}
Federated learning (FL) seeks to optimize a shared global model from decentralized client data. While effective for average performance, this process can produce unfair performance across clients, particularly under statistically heterogeneous data distributions. Existing fairness-aware and multi-objective FL methods address this through client re-weighting, gradient correction, or common-descent formulations, but often change the global optimization direction or treat client objectives symmetrically. We propose FedPCD, a priority-constrained federated optimization framework that separates the global optimization objective from client protection. FedPCD preserves the conventional federated aggregate as the primary objective, while allowing only the minimum correction necessary to meet client performance constraints. Across heterogeneous FL benchmarks, FedPCD consistently outperforms state-of-the-art baselines, achieving the highest global accuracy across all configurations and leading worst-10\% client accuracy in nearly all. FedPCD further preserves competitive trade-offs between global accuracy and client performance variability. Beyond this trade-off, FedPCD demonstrates faster empirical convergence, reaching matching or superior performance in substantially fewer communication rounds.

\end{abstract}

\section{Introduction}

Federated learning (FL) enables multiple clients to collaboratively train a shared model while keeping their data decentralized \citep{mcmahanCommunicationEfficientLearningDeep2017, kairouzAdvancesOpenProblems2021}. Standard federated averaging (FedAvg) constructs a global update by aggregating the locally trained model updates of participating clients \citep{mcmahanCommunicationEfficientLearningDeep2017}. This aggregation provides a simple and effective way to optimize a shared objective. However, client data distributions are commonly heterogeneous \citep{liFederatedOptimizationHeterogeneous2020, karimireddySCAFFOLDStochasticControlled2020}, and when local data distributions differ, the resulting direction can make substantial progress for some clients while providing little, none or even adverse progress for others \citep{wangFederatedLearningFair2021, huFederatedLearningMeets2022, panFedMDFGFederatedLearning2023}. This motivates performance fairness in FL, in which the goal is to improve the uniformity of client-level performance while maintaining the utility of the global model \citep{liFairResourceAllocation2019, shiFairnessAwareFederatedLearning2024}

A substantial body of work has consequently modified the optimization objective to improve client-level fairness \cite{shiFairnessAwareFederatedLearning2024}. Among the earliest works, Agnostic FL (AFL) formulates a minimax objective over mixtures of client distributions and optimizes it by adaptively updating the mixture weights according to their losses \citep{mohriAgnosticFederatedLearning2019}. Similarly, q-Fair FL (q-FFL) modifies the objective to place greater emphasis on clients with higher losses \citep{liFairResourceAllocation2019}. Other approaches formulate federated learning as a multi-objective optimization problem, treating client losses as separate objectives and seeking an update direction that provides simultaneous progress across clients \citep{huFederatedLearningMeets2022, panFedMDFGFederatedLearning2023, panFedLFLayerWiseFair2024}. Gradient-conflict-based methods similarly modify the aggregated direction when client updates are incompatible \citep{wangFederatedLearningFair2021}. Despite their different formulations, these methods share the broader goal of modifying the global optimization process to account for client-level objectives rather than aggregate performance alone.

Multi-objective approaches are of particular relevance to our approach because they treat client losses as separate objectives and seek a common descent direction that improves the objectives of all participating clients. These methods typically seek convergence to a Pareto-stationary solution, at which no direction exists that can simultaneously decrease all client objectives to first order \citep{huFederatedLearningMeets2022}. While this provides a principled way to account for client-level progress, the client objectives are treated symmetrically when constructing the global update. In standard FL, however, the FedAvg direction already represents the update preferred by the shared global objective. Replacing this direction may be unnecessary when it already provides sufficient progress for the individual client objectives.

This idea motivates FedPCD, a federated adaptation of Priority-Constrained Descent (PCD) \citep{varamNotAllObjectives2026}. PCD is a hierarchical multi-objective optimization framework that distinguishes a primary objective from secondary objectives and seeks the smallest deviation from the primary descent direction required to guarantee progress on secondary objectives. In FedPCD, the FedAvg update assumes the primary role, while individual client update directions define secondary constraints. Therefore, the server seeks a direction that remains as close as possible to the FedAvg update while satisfying the prescribed client-level requirements. If the FedAvg direction already satisfies these requirements, FedPCD leaves it unchanged. Otherwise, it introduces the minimum correction necessary to satisfy the violated constraints. Consequently, improved uniformity of client performance arises from the client-level protection, rather than optimizing an explicit fairness surrogate. The contributions of this paper are as follows:

\begin{itemize}
     \item We introduce FedPCD, a priority-constrained formulation that treats the conventional federated aggregate as the primary objective and individual client updates as secondary constraints, allowing only for the minimum deviation required to satisfy these constraints.

     \item We extend the formulation to partial participation through a client-history mechanism that retains recently observed client directions to increase representation.

     \item We evaluate FedPCD across several FL settings, including 2 levels of participation, 3 datasets, and 3 data heterogeneity levels. FedPCD consistently achieves the highest global accuracy, superior worst-$10\%$ client accuracy, and competitive client-performance spread compared to state-of-the-art fairness-aware FL methods, indicating that preserving client-level progress naturally translates into strong tail-client and fairness performance.

    \item We demonstrate how FedPCD substantially improves empirical communication efficiency, reaching FedAvg's final accuracy using only $10\%$--$44\%$ of the total communication rounds and outperforming the strongest baseline in convergence speed in 16 of 18 configurations.

    \item We establish an $\mathcal{O}(1/T)$ convergence rate for the round-averaged squared $\ell_2$-norm of the global gradient under deterministic full participation, and show convergence to simultaneous client stationarity under the stated geometric conditions.

\end{itemize}

\section{Related Work}
\label{related_works}

Existing approaches to client fairness broadly fall into 3 paradigms: reweighting client objectives, resolving gradient conflicts, or formulating fair aggregation as a multi-objective optimization problem (Fig.~\ref{fig:taxonomy}). We examine these approaches below and contrast them with the asymmetric formulation of FedPCD.

\begin{figure}[t]
\centering
\input{figures/fig_taxonomy.tex}
\caption{A taxonomy of the fairness-aware federated learning methods discussed in Sec.~\ref{related_works}.}
\label{fig:taxonomy}
\end{figure}

\subsection{Fair Federated Learning}
Performance fairness in FL seeks to reduce disparities in the performance of a shared global model across clients \cite{shiFairnessAwareFederatedLearning2024}. A major line of work addresses this problem by changing the relative influence of client losses during optimization to shift the training away from average performance. These works range from worst-case optimization over mixtures of client distributions \citep{mohriAgnosticFederatedLearning2019} to emphasizing disadvantaged clients in the weighted objective \citep{liFairResourceAllocation2019, liTiltedEmpiricalRisk2020, huangFairnessAccuracyHorizontal2022}. More recent methods move beyond a single client-level weight by also modifying parameter contributions. FedHEAL suppresses less informative updates and adjusts aggregation according to domain diversity, while FedPW combines parameter selection with a weighting strategy dependent on the training phase to balance a tradeoff between generalization and fairness \citep{chenFairFederatedLearning2024, guanRethinkingFairFederated2025}.

Another line of work connects performance disparities to conflicts among client updates. FedFV identifies conflicting gradients as a source of unfairness and modifies client directions through iterative projection before aggregation \citep{wangFederatedLearningFair2021}. More recently, CRAFT formulates conflict resolution as a constrained correction problem, seeking an update close to a reference direction while enforcing positive alignment with the participating clients \citep{wangCRAFTConflictResolvedAggregation2026}. Furthermore, partial participation complicates fairness-aware learning because the updates available in a communication round represent only a subset of the population. Therefore, historical information has been used to account for absent clients in FedFV, FedMDFG, FedLF, and FedMC+ \citep{wangFederatedLearningFair2021, panFedMDFGFederatedLearning2023, panFedLFLayerWiseFair2024, shenMultiobjectiveFederatedLearning2025}. These approaches motivate the use of historical client information to improve representation under partial participation while exposing a trade-off between broader client coverage and gradient staleness \citep{panFedLFLayerWiseFair2024}.

\subsection{Multi-Objective Federated Optimization}
Beyond reweighting and conflict-aware aggregation, several methods approach fair aggregation as a multi-objective optimization (MOO) problem. Building on the Multiple Gradient Descent Algorithm (MGDA)~\citep{desideriMultiplegradientDescentAlgorithm2012}, FedMGDA+ seeks Pareto-stationary solutions by constructing a common descent direction across all client gradients~\citep{huFederatedLearningMeets2022}. Subsequent works refine this common-descent paradigm: AdaFed scales updates by client loss magnitudes to prioritize lagging participants~\citep{hamidiAdaFedFairFederated2023}, FedMDFG and FedLF incorporate explicit fairness regularizers, with FedLF operating layer-wise~\citep{panFedMDFGFederatedLearning2023, panFedLFLayerWiseFair2024}, and DQN-Fed extends the formulation to quasi-Newton directions~\citep{hamidiDistributedQuasiNewtonMethod2025}. While these methods treat client losses symmetrically to find a shared compromise, FedMC+~\citep{shenMultiobjectiveFederatedLearning2025} uniquely separates the average objective from local conflict mitigation. FedPCD shares this distinction between aggregate and local objectives, but departs fundamentally in execution. Rather than solving a joint multi-objective problem, FedPCD treats client objectives as hard minimum-progress secondary constraints, projecting the standard federated update only by the minimal margin needed to satisfy them.

\section{The Federated PCD Framework}
\label{methodology}

\subsection{Preliminaries}
\label{preliminaries}

\subsubsection{Federated Learning Setup}
\label{sec:federated_setup}
Consider a federation of $N$ clients. Client $i$ holds a local data distribution and optimizes a local objective $F_i(\theta)$. The standard federated objective is the weighted average of the client objectives,
\begin{equation}
    F(\theta)
    =
    \sum_{i=1}^{N} p_i F_i(\theta),
    \qquad
    p_i \geq 0,
    \qquad
    \sum_{i=1}^{N} p_i = 1.
    \label{eq:global_objective}
\end{equation}
The aggregation weights are commonly chosen in proportion to the
size of each client's local dataset,
$p_i = \frac{|\mathcal{D}_i|}{\sum_{j=1}^{N}|\mathcal{D}_j|},$
where $\mathcal{D}_i$ denotes the local training dataset of client $i$. FedAvg approximately optimizes~Equation \ref{eq:global_objective} through repeated local training and server aggregation \citep{mcmahanCommunicationEfficientLearningDeep2017}. At communication round $t$, the server broadcasts the current global model $\theta_t$ to a subset of clients $\mathcal{S}_t$. We denote the client participation rate by $\rho\in(0, 1]$, so that the number of participating clients per round is $\rho N$. Each participating client initializes its local model as $\theta_{i,t}^{(0)} = \theta_t$ and performs $K$ local optimization steps. For local SGD, a step can be written as
\begin{equation}
    \theta_{i,t}^{(k+1)}
    =
    \theta_{i,t}^{(k)}
    -
    \eta_{\ell}
    \widehat{\nabla} F_i
    \left(\theta_{i,t}^{(k)}\right),
    \qquad
    k=0,\ldots,K-1,
    \label{eq:local_training}
\end{equation}

where $\eta_{\ell}$ is the local learning rate and
$\widehat{\nabla}F_i$ denotes a stochastic estimate of the local
gradient. After local training, client $i$ returns the client update $\Delta_{i,t} = \theta_t-\theta_{i,t}^{(K)}$.
FedAvg then aggregates the participating client updates as $g_t = \sum_{i\in\mathcal{S}_t} \bar p_{i,t}\Delta_{i,t}$ where
\begin{equation}
    g_t
    =
    \sum_{i\in\mathcal{S}_t}
    \bar p_{i,t}\Delta_{i,t},
    \qquad
    \bar p_{i,t}
    =
    \frac{p_i}
    {\sum_{j\in\mathcal{S}_t}p_j},
    \label{eq:fedavg_direction}
\end{equation}
and updates the global model according to
$
    \theta_{t+1}
    =
    \theta_t-\eta_g g_t,
    \label{eq:fedavg_update}
$
where $\eta_g$ is the server learning rate parameter that can be introduced to control the step size \citep{reddiAdaptiveFederatedOptimization2020}.

\subsubsection{Priority-Constrained Descent}

PCD addresses hierarchical multi-objective optimization problems by distinguishing a primary objective from a set of secondary objectives \citep{varamNotAllObjectives2026}. Let $\tilde g_0$ denote the normalized descent direction of the primary objective and let $\tilde g_j$, $j=1,\ldots,J$, denote the normalized descent directions of the secondary objectives. PCD formulates the update as a convex quadratic program (QP) that seeks a direction as close as possible to the primary objective while guaranteeing a prescribed amount of first-order progress on each secondary objective:

\begin{equation}
    \tilde d^\star
    =
    \arg\min_{\tilde d}
    \frac{1}{2}
    \left\lVert \tilde d-\tilde{g}_0\right\rVert^2
    \quad
    \mathrm{s.t.}
    \quad
    \tilde g_j^\top \tilde d
    \geq
    \tau\lVert \tilde g_j\rVert^2,
    \qquad
    j=1,\ldots,J,
    \label{eq:pcd_generic}
\end{equation}
The QP in Equation~\ref{eq:pcd_generic} has a strictly convex quadratic objective and linear inequality constraints. It preserves $\tilde g_0$ as the preferred descent direction, while the constraints determine whether that direction provides sufficient progress on the secondary objectives. Geometrically, each constraint defines a half-space of acceptable directions, and their intersection defines the feasible region of the PCD problem. Therefore, whenever the problem is feasible, PCD can be interpreted as projecting the primary objective onto this feasible region. Since the objective is strictly convex and the constraints are linear, this projection is unique. If $\tilde g_0$ already satisfies every secondary progress constraint, no correction is required and $\tilde d^\star = \tilde g_0$. Otherwise, the solution differs from the primary objective only through the secondary constraints that are active at the optimum. Let $\mathcal{A}$ denote this active set. The optimality conditions of Equation~\ref{eq:pcd_generic} give
\begin{equation}
    \tilde d^\star
    =
    \tilde g_0
    +
    \sum_{j\in\mathcal{A}}
    \mu_j^\star \tilde g_j,
    \qquad
    \mu_j^\star\geq0,
    \label{eq:pcd_active_form}
\end{equation}
where the multipliers $\mu_j^\star$ determine the amount of correction required along each active secondary direction. They satisfy the complementary-slackness conditions
\begin{equation}
    \mu_j^\star
    \left(
        \tilde g_j^\top \tilde d^\star
        -
        \tau\lVert \tilde g_j\rVert^2
    \right)
    =
    0,
    \qquad
    j=1,\ldots,J.
    \label{eq:pcd_complementarity}
\end{equation}
Hence, a secondary objective whose constraint is already satisfied does not directly modify the primary objective.

The parameter $\tau\in(0,1]$ controls the strength of these requirements. Larger values demand greater progress on the secondary objectives and can require a larger deviation from $\tilde g_0$. PCD thus creates an explicit hierarchy between the objectives: the primary objective determines the preferred direction of optimization, while the secondary objectives determine which deviations from that direction are necessary.

\subsection{Priority-Constrained Federated Aggregation}
\label{sec:fedpcd_aggregation}

\begin{figure*}[t]
    \centering
    \includegraphics[width=\linewidth]{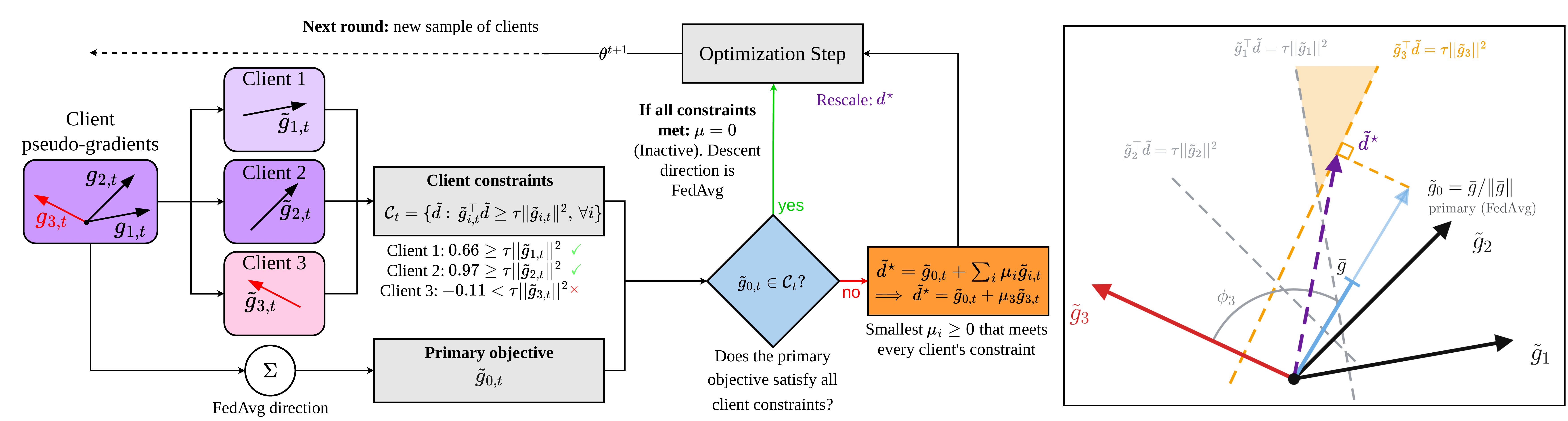}
    \caption{FedPCD system diagram, showing an example with three participating clients at communication round $t$. \textbf{Left:} Algorithmic steps taken to obtain a final direction in gradient space ($\tilde d^\star$). \textbf{Right:} Objective space view, showing the resultant $\tilde d^\star$. The half-space projection of each gradient is used to find the feasible region, which then allows us to see the minimum distortion required to satisfy all secondary constraints.}
    \label{fig:system-diagram}
\end{figure*}

FedPCD transfers the hierarchical structure of PCD to federated optimization by treating the conventional federated aggregate as the primary objective and the individual client updates as secondary directions. Therefore, the server preserves the aggregate update as its preferred direction while imposing client-level progress requirements through the secondary constraints. A system diagram is provided in Fig.~\ref{fig:system-diagram}.

The standard FedAvg formulation in Section~\ref{sec:federated_setup} operates on the model displacement $\Delta_{i,t}$. FedPCD converts this displacement to a local pseudo-gradient by dividing by the local learning rate:
\begin{equation}
    g_{i,t}
    =
    \frac{\Delta_{i,t}}{\eta_\ell}
    =
    \frac{
        \theta_t-\theta_{i,t}^{(K)}
    }{
        \eta_\ell
    }=\sum_{k=0}^{K-1} \widehat{\nabla}F_i(\theta_{i,t}^{(k)}).
    \label{eq:fedpcd_client_direction}
\end{equation}

The primary FedPCD direction is formed by aggregating the participating client pseudo-gradients using the same weights as FedAvg to create a FedAvg-like descent direction:
\begin{equation}
    \bar g_t
    =
    \sum_{i\in\mathcal{S}_t}
    \bar p_{i,t}g_{i,t}
    =
    \frac{1}{\eta_\ell}
    \sum_{i\in\mathcal{S}_t}
    \bar p_{i,t}\Delta_{i,t}
    =
    \frac{g_t}{\eta_\ell},
    \label{eq:fedpcd_aggregate}
\end{equation}

where $g_t$ is the conventional FedAvg aggregate from Equation~\ref{eq:fedavg_direction}. Since $\eta_\ell>0$, this transformation preserves the direction of the conventional federated aggregate while expressing it on the same pseudo-gradient scale as the client directions. FedPCD then constructs the constrained problem from the normalized aggregate and the normalized participating client directions where
\begin{equation}
    \tilde g_{0,t}
    =
    \frac{\bar g_t}
    {\lVert\bar g_t\rVert+\varepsilon},
    \qquad
    \tilde g_{i,t}
    =
    \frac{g_{i,t}}
    {\lVert g_{i,t}\rVert+\varepsilon},
    \label{eq:fedpcd_normalization}
\end{equation}
is the standard $\ell_2$-normalization with a small numerical constant $\varepsilon>0$ . Although the original PCD work reports stronger empirical performance with EMA normalization than with \(\ell_2\)-normalization \citep{varamNotAllObjectives2026}, directly maintaining EMA statistics for client directions is less straightforward under partial participation because each client's update is only observed when that client is sampled. For infrequently participating clients, these normalization statistics can therefore become stale. We examine the effect of EMA normalization separately in App.~\ref{app:ablations}. The current federated aggregate provides the primary objective, while each participating client contributes a secondary progress requirement. Therefore, the resulting FedPCD program is
\begin{equation}
    \tilde d_t^\star
    =
    \arg\min_{\tilde d}
    \frac{1}{2}
    \left\lVert
        \tilde d-\tilde g_{0,t}
    \right\rVert^2
    \quad
    \mathrm{s.t.}
    \quad
    \tilde g_{i,t}^{\top}\tilde d
    \geq
    \tau \lVert \tilde g_{i,t}\rVert^2,
    \qquad
    i\in\mathcal{S}_t.
    \label{eq:fedpcd_qp}
\end{equation}

The solution $\tilde d_t^\star$ determines the direction that remains closest to the current federated aggregate while satisfying the client-level progress requirements. However, since the QP is constructed using normalized directions, the magnitude of $\tilde d_t^\star$ no longer reflects the magnitude of the current federated update. Therefore, FedPCD rescales the solution using the norm of the current aggregate,
\begin{equation}
    d_t
    =
    \frac{\lVert \bar g_t\rVert}
         {\lVert \tilde d_t^\star\rVert}
    \tilde d_t^\star,
    \label{eq:fedpcd_rescaling}
\end{equation}
and applies the resulting direction through the server update
\begin{equation}
    \theta_{t+1}
    =
    \theta_t-\eta_g d_t.
    \label{eq:fedpcd_global_update}
\end{equation}

Thus, the PCD program determines how the direction is modified, while the magnitude of the update remains tied to the federated aggregate. An important consequence is that FedPCD introduces no directional correction when the current aggregate already satisfies all the constraints. In this case,  $\tilde d_t^\star=\tilde g_{0,t}$, and the rescaling in Equation~\ref{eq:fedpcd_rescaling}  gives $d_t=\bar g_t$, preserving the conventional federated direction.

\subsection{Partial Participation and Client History}

Under partial participation, Equation~\ref{eq:fedpcd_qp} contains constraints only for clients sampled in the current round. This provides only a narrow representation of the client population, especially when the participation rate is small and data heterogeneity across clients is severe. Therefore, FedPCD incorporates a history window $H\ge0$ of communication rounds that stores the pseudo-gradient of each client that participated within this window, representing additional constraints to the QP in Equation~\ref{eq:fedpcd_qp}. Each client contributes only their most recent update, replacing their previously stored direction if they participate again within the history window. If a client has not participated within the $H$ rounds, its cached pseudo-gradient is removed. This way, the current participants contribute their latest directions, while recently absent clients continue to influence the federated aggregate through their cached updates for a specific history window $H$. Importantly, client history is used only to construct the secondary constraints. The primary direction $\bar g_t$ remains the current federated aggregate and is computed exclusively from clients participating in round $t$. For a cached client direction $h_{i,t}$, the corresponding FedPCD constraint is $\tilde h_{i,t}^\top \tilde d \ge \tau \| \tilde h_{i,t}\|^2$, where $\tilde h_{i,t}$ denotes the normalized cached pseudo-gradient. Setting $H=0$ reduces FedPCD to the current-round formulation in Equation~\ref{eq:fedpcd_qp}. The complete FedPCD framework is summarized in Algorithm \ref{alg:fedpcd}.

Under independent uniform participation, the per-round client participation probability is $\rho$ and the expected number of distinct clients represented after the history window is populated becomes $N(1-(1-\rho)^{H+1})$. Therefore, the impact of the history window represents a tradeoff between broader client representation at low participation rates and the potential increase in staleness of updates for the absent clients \cite{panFedLFLayerWiseFair2024}. Fig.~\ref{fig:history-demo} illustrates the idea of history caching for FedPCD in a synthetic setup. We simulate a scenario with $N = 30$ clients with $10\%$ participation and $\tau = 0.1$. Tracing 30 rounds of this process, we see that without history, the program never contains more than 3 constraints (or secondary objectives), skewing the resultant $\tilde d^\star$ towards the current-round gradients. With history, the cached minority clients still constrain the program and incorporate them into the optimization step $\tilde d^\star$. Since there is no backlog of gradients in the early rounds, we do not see the impact of history, but the separation becomes clear at round 16, where $\tilde d^\star$ with and without history point in different directions.

\begin{figure*}[!t]
    \centering
    \includegraphics[width=0.9\linewidth]{figures/History-Demonstration.pdf}
    \caption{Demonstration of history for FedPCD, with $N = 30$ clients, $10\%$ participation and $H = 10$. \textbf{Top:} PCD program at 4 selected rounds, marked on the curve. Paler arrows are older cache entries, and red arrows are minority clients that do not satisfy the PCD condition. \textbf{Bottom:} number of secondary constraints per round, which grows from 3 participants as a function of the communication rounds.}
    \label{fig:history-demo}
\end{figure*}

\begin{algorithm}[t]
\caption{FedPCD with Client History}
\label{alg:fedpcd}
\begin{algorithmic}[1]
\Require Initial model $\theta_0$, rounds $T$, participation rate $\rho$,
local steps $K$, learning rates $\eta_\ell,\eta_g$, constraint parameter $\tau$,
history window $H$, aggregation weights $\{p_i\}_{i=1}^{N}$

\State Initialize empty history buffer $\mathcal{B}$

\For{$t=0,\ldots,T-1$}

    \State Sample $\mathcal{S}_t$ according to participation rate $\rho$
    and broadcast $\theta_t$

    \ForAll{$i\in\mathcal{S}_t$ \textbf{in parallel}}
        \State $\theta_{i,t}^{(0)}\gets\theta_t$
        \For{$k=0,\ldots,K-1$}
            \State
            $\theta_{i,t}^{(k+1)}
            \gets
            \theta_{i,t}^{(k)}
            -
            \eta_\ell
            \widehat{\nabla}F_i(\theta_{i,t}^{(k)})$
        \EndFor
        \State Send
        $\Delta_{i,t}\gets\theta_t-\theta_{i,t}^{(K)}$
        to the server
    \EndFor

    \State Compute
    $g_{i,t}\gets\Delta_{i,t}/\eta_\ell$,
    $\forall i\in\mathcal{S}_t$

    \State Compute
    $\bar g_t
    \gets
    \sum_{i\in\mathcal{S}_t}
    \bar p_{i,t}g_{i,t}$
    \State Remove entries from $\mathcal{B}$ older than $H$ rounds
    \State Set
    $h_{i,t}\gets\mathcal{B}[i]$
    $\forall \, i\in\mathcal{B}$

    \State Normalize $\bar g_t$, $\{g_{i,t}\}$, and $\{h_{i,t}\}$
    using Equation~\ref{eq:fedpcd_normalization}

    \State Solve $\displaystyle
        \tilde d_t^\star =
        \arg\min_{\tilde d} \frac{1}{2}\lVert \tilde d-\tilde g_{0,t}\rVert^2
        \quad \mathrm{s.t.}\quad
        \tilde g_{i,t}^{\top}\tilde d \geq \tau\lVert\tilde g_{i,t}\rVert^2
        \ \forall i\in\mathcal{S}_t,\quad
        \tilde h_{j,t}^{\top}\tilde d \geq \tau\lVert\tilde h_{j,t}\rVert^2
        \ \forall j\in\mathcal{B}
        $

    \State Rescale
    $d_t
    \gets
    \dfrac{\lVert\bar g_t\rVert}
          {\lVert \tilde d_t^\star\rVert}
    \tilde d_t^\star$

    \State Update
    $\theta_{t+1}
    \gets
    \theta_t-\eta_g d_t$

    \State Replace any entry in $\mathcal{B}$ for clients in $\mathcal{S}_t$ with their current pseudo-gradient.
\EndFor

\State \Return $\theta_T$
\end{algorithmic}
\end{algorithm}

\section{Theoretical Analysis}
\label{sec:theory}
To understand the theoretical implications of enforcing the client-level secondary progress constraints, we analyze the convergence behavior of the resulting update with respect to both the global federated objective and the individual client objectives. The analysis follows a standard FL setting with full client participation and a common number $K$ of deterministic (full-batch) local gradient steps, and sets $H=0$. We further assume that each $F_i$ is $L$-smooth and that the global objective $F$ is bounded below by $F_\star$. These are standard assumptions in federated optimization analyses \citep{karimireddySCAFFOLDStochasticControlled2020, wangCRAFTConflictResolvedAggregation2026}. The additional FedPCD-specific condition concerns the geometry of the constrained direction. Let $\mathcal I_t=\{i:g_{i,t}\neq0\}$ denote the clients with nonzero pseudo-gradients. We use $\gamma>0$ to denote a uniform lower bound on the directional alignment produced by the constrained program,
\begin{equation}
    \frac{
        g_{i,t}^{\top}\tilde d_t^\star
    }{
        \lVert g_{i,t}\rVert
        \lVert \tilde d_t^\star\rVert
    }
    \geq \gamma,
    \qquad i\in\mathcal I_t.
    \label{eq:main_alignment_margin}
\end{equation}
The constant $\gamma$ is not an additional hyperparameter. It quantifies the smallest cosine alignment induced by the constrained program over the analyzed rounds and is distinct from the progress threshold $\tau$. We assume that the QP remains feasible and that this margin holds uniformly throughout training. When $K>1$, later local gradients are evaluated at locally updated models rather than at $\theta_t$. Their resulting deviation from $K\nabla F_i(\theta_t)$ is captured by
\begin{equation}
    r
    =
    \frac{1}{K}
    \sum_{k=0}^{K-1}
    \left[(1+L\eta_\ell)^k-1\right],
    \qquad
    \gamma_{\mathrm{eff}}=\gamma(1-r)-r,
    \label{eq:main_drift_factors}
\end{equation}
where $\gamma_{\mathrm{eff}}$ is the remaining client-alignment margin after accounting for local drift. Complete assumptions and proofs are given in App.~\ref{app:theory}.

\begin{theorem}[Finite-time convergence of FedPCD]
\label{thm:fedpcd_stationarity}
Suppose the conditions above hold. If $\gamma_{\mathrm{eff}}>0$ and
\begin{equation}
    0<\eta_g
    \leq
    \frac{\gamma_{\mathrm{eff}}}{LK(1+r)},
    \label{eq:fedpcd_server_stepsize}
\end{equation}
then, after $T$ communication rounds,
\begin{equation}
    \frac{1}{T}
    \sum_{t=0}^{T-1}
    \left\lVert\nabla F(\theta_t)\right\rVert^2
    \leq
    \frac{
        2\left[F(\theta_0)-F_\star\right]
    }{
        \eta_g \gamma_{\mathrm{eff}}\gamma K(1-r)T
    }
    =
    \mathcal{O}(1/T).
    \label{eq:fedpcd_convergence}
\end{equation}
\end{theorem}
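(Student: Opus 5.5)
We run the standard descent-lemma argument on the global objective $F$, which is $L$-smooth as a convex combination of $L$-smooth $F_i$. With $\theta_{t+1}=\theta_t-\eta_g d_t$ we get
\begin{equation*}
F(\theta_{t+1})\le F(\theta_t)-\eta_g\nabla F(\theta_t)^\top d_t+\tfrac{L\eta_g^2}{2}\lVert d_t\rVert^2 .
\end{equation*}
By the rescaling in Equation~\ref{eq:fedpcd_rescaling}, $\lVert d_t\rVert=\lVert\bar g_t\rVert$. The plan therefore needs two bounds in terms of $\lVert\nabla F(\theta_t)\rVert$: a lower bound on $\nabla F(\theta_t)^\top d_t$ and an upper bound on $\lVert\bar g_t\rVert$. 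We then telescope over $t=0,\dots,T-1$, using $F(\theta_T)\ge F_\star$.

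\textbf{Drift control.} With deterministic steps, write $g_{i,t}=K\nabla F_i(\theta_t)+e_{i,t}$, where $e_{i,t}=\sum_k[\nabla F_i(\theta_{i,t}^{(k)})-\nabla F_i(\theta_t)]$.
\begin{itemize}
\item By $L$-smoothness and induction on the local iterates, $\lVert\theta^{(k)}_{i,t}-\theta_t\rVert\le \eta_\ell\lVert\nabla F_i(\theta_t)\rVert\sum_{s<k}(1+L\eta_\ell)^s$.
\item Hence $\lVert\nabla F_i(\theta_{i,t}^{(k)})-\nabla F_i(\theta_t)\rVert\le[(1+L\eta_\ell)^k-1]\lVert\nabla F_i(\theta_t)\rVert$.
\item Summing over $k$ gives $\lVert e_{i,t}\rVert\le rK\lVert\nabla F_i(\theta_t)\rVert$.
\item Consequently $(1-r)K\lVert\nabla F_i(\theta_t)\rVert\le\lVert g_{i,t}\rVert\le(1+r)K\lVert\nabla F_i(\theta_t)\rVert$.
\end{itemize}
These bounds make $r$ the exact drift factor of Equation~\ref{eq:main_drift_factors}.

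\textbf{Alignment of the true client gradients.} Let $u=\tilde d_t^\star/\lVert\tilde d_t^\star\rVert$, so $d_t=\lVert\bar g_t\rVert u$. Combining Equation~\ref{eq:main_alignment_margin} with the drift bound gives
\begin{equation*}
K\nabla F_i(\theta_t)^\top u=g_{i,t}^\top u-e_{i,t}^\top u\ge\gamma\lVert g_{i,t}\rVert-rK\lVert\nabla F_i\rVert\ge[\gamma(1-r)-r]K\lVert\nabla F_i(\theta_t)\rVert.
\end{equation*}
So $\nabla F_i(\theta_t)^\top u\ge\gamma_{\mathrm{eff}}\lVert\nabla F_i(\theta_t)\rVert$ for every client, which is where $\gamma_{\mathrm{eff}}$ arises. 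Clients with $g_{i,t}=0$ need a small separate remark: their bound holds trivially or they are excluded.

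\textbf{Aggregation.} With full participation $\bar p_{i,t}=p_i$, so averaging gives
\begin{equation*}
\nabla F(\theta_t)^\top u\ge\gamma_{\mathrm{eff}}\sum_i p_i\lVert\nabla F_i(\theta_t)\rVert\ge\gamma_{\mathrm{eff}}\lVert\nabla F(\theta_t)\rVert,
\end{equation*}
the last step by the triangle inequality. For $\lVert\bar g_t\rVert$ we need both directions:
\begin{itemize}
\item Upper: $\lVert\bar g_t\rVert\le K(1+r)\sum_i p_i\lVert\nabla F_i\rVert$.
\item Lower: $\lVert\bar g_t\rVert\ge\bar g_t^\top u=\sum_ip_ig_{i,t}^\top u\ge\gamma\sum_ip_i\lVert g_{i,t}\rVert\ge\gamma K(1-r)\sum_ip_i\lVert\nabla F_i\rVert$.
\end{itemize}
Writing $A_t=\sum_ip_i\lVert\nabla F_i(\theta_t)\rVert$, the descent step becomes
\begin{equation*}
F(\theta_{t+1})\le F(\theta_t)-\eta_g\lVert\bar g_t\rVert\Big(\gamma_{\mathrm{eff}}A_t-\tfrac{L\eta_g}{2}K(1+r)A_t\Big).
\end{equation*}
The step-size condition in Equation~\ref{eq:fedpcd_server_stepsize} makes the bracket at least $\tfrac{\gamma_{\mathrm{eff}}}{2}A_t$. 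Using the lower bound on $\lVert\bar g_t\rVert$ then gives a decrease of at least $\tfrac{\eta_g}{2}\gamma_{\mathrm{eff}}\gamma K(1-r)A_t^2\ge\tfrac{\eta_g}{2}\gamma_{\mathrm{eff}}\gamma K(1-r)\lVert\nabla F(\theta_t)\rVert^2$. Telescoping yields exactly Equation~\ref{eq:fedpcd_convergence}.

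\textbf{Main obstacle.} The delicate step is the drift-perturbed alignment. We must ensure the cosine margin, which holds for the pseudo-gradients $g_{i,t}$, transfers to the true gradients $\nabla F_i(\theta_t)$ with the precise constants $\gamma(1-r)-r$, and we must pair the upper and lower norm bounds so that the constants match the stated ones. We also need to confirm that $r<1$ is implied by $\gamma_{\mathrm{eff}}>0$, which holds since $\gamma\le1$; this is what keeps the factor $(1-r)$ positive.
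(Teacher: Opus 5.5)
Your proposal is correct and follows the paper's proof essentially step for step. It uses the same relative-drift bound $\lVert e_{i,t}\rVert\le Kr\lVert\nabla F_i(\theta_t)\rVert$, the same transfer of the cosine margin to $\gamma_{\mathrm{eff}}=\gamma(1-r)-r$, the same two-sided bound $\gamma K(1-r)A_t\le\lVert\bar g_t\rVert\le K(1+r)A_t$ to absorb the smoothness remainder, and the same telescoping. As a minor remark, $\gamma_{\mathrm{eff}}>0$ is equivalent to $r<\gamma/(1+\gamma)$, so it implies $r<1$ for any $\gamma>0$ without invoking $\gamma\le 1$, and a round with all $g_{i,t}=0$ is covered trivially by $d_t=0$.
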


Theorem~\ref{thm:fedpcd_stationarity} gives an $\mathcal{O}(1/T)$ deterministic nonconvex stationarity rate with fixed admissible local and server learning rates, including for $K>1$.

\begin{corollary}[Composite multi-stationarity]
\label{cor:fedpcd_cms}
Under the conditions of Theorem~\ref{thm:fedpcd_stationarity},
\begin{equation}
    \frac{1}{T}
    \sum_{t=0}^{T-1}
    \sum_{i=1}^{N}
    p_i
    \left\lVert\nabla F_i(\theta_t)\right\rVert^2
    =
    \mathcal{O}(1/T).
    \label{eq:fedpcd_client_stationarity}
\end{equation}
For an infinite run satisfying the conditions uniformly, every client gradient converges to zero and every accumulation point is composite multi-stationary (CMS).
\end{corollary}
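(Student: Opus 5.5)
The plan is to reuse the per-round descent inequality behind Theorem~\ref{thm:fedpcd_stationarity}, but in a form that keeps the client-gradient quantity $S_t:=\sum_{i=1}^N p_i\lVert\nabla F_i(\theta_t)\rVert$ rather than weakening it to $\lVert\nabla F(\theta_t)\rVert$. I then convert a summability bound on $S_t^2$ into one on $\sum_i p_i\lVert\nabla F_i(\theta_t)\rVert^2$. Throughout I write $\hat d_t=\tilde d_t^\star/\lVert\tilde d_t^\star\rVert$, so that $d_t=\lVert\bar g_t\rVert\hat d_t$. I assume $p_i>0$ for all $i$, which holds for the dataset-size weights, and set $p_{\min}=\min_i p_i$.

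\textbf{Step 1 (per-client alignment after drift).} For $K$ deterministic local steps, $L$-smoothness and induction on $k$ give
\begin{equation*}
\lVert\nabla F_i(\theta_{i,t}^{(k)})-\nabla F_i(\theta_t)\rVert\le[(1+L\eta_\ell)^k-1]\lVert\nabla F_i(\theta_t)\rVert .
\end{equation*}
Summing over $k$ yields $\lVert g_{i,t}-K\nabla F_i(\theta_t)\rVert\le rK\lVert\nabla F_i(\theta_t)\rVert$. Consequently
\begin{equation*}
K(1-r)\lVert\nabla F_i(\theta_t)\rVert\le\lVert g_{i,t}\rVert\le K(1+r)\lVert\nabla F_i(\theta_t)\rVert .
\end{equation*}
Combining this with the margin \eqref{eq:main_alignment_margin} (trivially true when $g_{i,t}=0$) gives
\begin{equation*}
K\nabla F_i(\theta_t)^\top\hat d_t\ge\gamma\lVert g_{i,t}\rVert-rK\lVert\nabla F_i(\theta_t)\rVert\ge K\gamma_{\mathrm{eff}}\lVert\nabla F_i(\theta_t)\rVert .
\end{equation*}
Averaging with the weights $p_i$ (here $\bar p_{i,t}=p_i$ under full participation) yields $\nabla F(\theta_t)^\top\hat d_t\ge\gamma_{\mathrm{eff}}S_t$. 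The same steps also give the two-sided bound
\begin{equation*}
\gamma K(1-r)S_t\le\bar g_t^\top\hat d_t\le\lVert\bar g_t\rVert\le K(1+r)S_t .
\end{equation*}

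\textbf{Step 2 (descent).} By $L$-smoothness of $F$,
\begin{equation*}
F(\theta_{t+1})\le F(\theta_t)-\eta_g\lVert\bar g_t\rVert\gamma_{\mathrm{eff}}S_t+\tfrac{L}{2}\eta_g^2\lVert\bar g_t\rVert^2 .
\end{equation*}
The step size \eqref{eq:fedpcd_server_stepsize} together with $\lVert\bar g_t\rVert\le K(1+r)S_t$ gives $L\eta_g\lVert\bar g_t\rVert\le\gamma_{\mathrm{eff}}S_t$, so the quadratic term is at most half the linear one. Using the lower bound on $\lVert\bar g_t\rVert$, I obtain
\begin{equation*}
F(\theta_{t+1})\le F(\theta_t)-\tfrac12\eta_g\gamma_{\mathrm{eff}}\gamma K(1-r)S_t^2 .
\end{equation*}
Telescoping and using $F\ge F_\star$ gives $\frac1T\sum_t S_t^2\le 2[F(\theta_0)-F_\star]/(\eta_g\gamma_{\mathrm{eff}}\gamma K(1-r)T)$. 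Since $\lVert\nabla F\rVert\le S_t$, this is exactly the computation underlying the theorem, and I would cite it as such.

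\textbf{Step 3 (from $S_t^2$ to the weighted sum of squares), which is the main obstacle.} Jensen's inequality points the wrong way here: $S_t^2\le\sum_i p_i\lVert\nabla F_i\rVert^2$. I instead use $S_t\ge p_{\min}\max_i\lVert\nabla F_i(\theta_t)\rVert$, which gives
\begin{equation*}
\sum_i p_i\lVert\nabla F_i(\theta_t)\rVert^2\le\max_i\lVert\nabla F_i(\theta_t)\rVert^2\le S_t^2/p_{\min}^2 .
\end{equation*}
This yields \eqref{eq:fedpcd_client_stationarity} with constant $2[F(\theta_0)-F_\star]/(p_{\min}^2\eta_g\gamma_{\mathrm{eff}}\gamma K(1-r))$. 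The price is the factor $p_{\min}^{-2}$, and this is the step where the positivity of all weights is genuinely needed.

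\textbf{Step 4 (infinite run).} If the conditions hold for all $t$, the telescoped bound gives $\sum_{t\ge0}S_t^2<\infty$, so $S_t\to0$. Hence $\lVert\nabla F_i(\theta_t)\rVert\le S_t/p_{\min}\to0$ for every $i$. For any accumulation point $\bar\theta=\lim_j\theta_{t_j}$, continuity of each $\nabla F_i$ (by $L$-smoothness) gives $\nabla F_i(\bar\theta)=0$ for all $i$, i.e.\ $\bar\theta$ is simultaneously stationary for every client. This is composite multi-stationarity, and it also implies Pareto stationarity.
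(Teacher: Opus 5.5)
Your proposal is correct and follows the paper's proof essentially step for step. It uses the same drift and alignment bounds, the same descent inequality yielding summability of $S_t^2$, and the same summability-plus-continuity argument for the infinite run. The only deviation is in Step~3. There the paper drops the nonnegative cross terms in $S_t^2=\bigl(\sum_i p_i\lVert\nabla F_i(\theta_t)\rVert\bigr)^2$ to get $S_t^2\ge\sum_i p_i^2\lVert\nabla F_i(\theta_t)\rVert^2\ge p_{\min}\sum_i p_i\lVert\nabla F_i(\theta_t)\rVert^2$. This gives the constant $p_{\min}^{-1}$ instead of your $p_{\min}^{-2}$, a factor $N$ better for uniform weights. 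Your route also bounds the worst client, $\max_i\lVert\nabla F_i(\theta_t)\rVert^2\le S_t^2/p_{\min}^2$, and either constant suffices for $\mathcal{O}(1/T)$.
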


CMS requires every client objective to be individually stationary \citep{varamNotAllObjectives2026}, and is therefore stronger than Pareto stationarity, which may occur through cancellation among nonzero client gradients. The CMS conclusion is conditional on the stated geometric conditions, since arbitrary heterogeneous client objectives need not admit a common stationary point. The complete proofs, the scope of the client-progress and CMS guarantees, and a comparison with CRAFT are provided in App.~\ref{app:theory}.

\section{Experiments}
\subsection{Experimental Setup}
\paragraph{Datasets and Federated Setup.} We evaluated FedPCD on CIFAR-10, CIFAR-100, and Fashion-MNIST datasets \cite{krizhevsky2009learning, xiaoFashionMNISTNovelImage2017}.  We use a CNN with 3 convolutional layers for CIFAR-10, Resnet18 for CIFAR-100, and a multilayer perceptron (MLP) with two hidden layers for FashionMNIST. To simulate non-IID data, we use Dirichlet partitioning following \citet{acarFederatedLearningBased2020} with $\alpha \in \{0.01, 0.1, 1\}$, where smaller values of $\alpha$ correspond to stronger statistical heterogeneity across clients. We consider 5\% and 10\% participation rates of clients, corresponding to $\rho\in\{0.05, 0.1\}$, in a pool of $N=100$ clients for each dataset. Selected clients perform one epoch of local training before returning the update to the server. We train the CIFAR-10 and CIFAR-100 dataset models for 500 communication rounds, and FashionMNIST for 300 rounds. For FedPCD-specific hyperparameters in the main results, we experiment with $H=10$ and $\tau \in \{0.005, 0.01, 0.02, 0.05, 0.075, 0.1\}$. Additional model and training details are provided in App.~\ref{app:reproducibility}.

\paragraph{Metrics and Baselines.} Performance is evaluated using the mean global accuracy, the standard deviation, and the mean worst-10\% client accuracy as a measure of tail performance, averaged across 5 seeds. We compare FedPCD against methods representing several families of federated optimization and fairness approaches. FedAvg~\citep{mcmahanCommunicationEfficientLearningDeep2017} serves as the standard aggregation baseline. We include traditional fairness-aware methods AFL~\citep{mohriAgnosticFederatedLearning2019}, q-FFL~\citep{liFairResourceAllocation2019}, and Ditto~\citep{liDittoFairRobust2021}. We further compare against multi-objective and conflict-aware approaches FedMGDA+~\citep{huFederatedLearningMeets2022}, FedFV~\citep{wangFederatedLearningFair2021}, FedLF~\citep{panFedLFLayerWiseFair2024}, and CRAFT~\citep{wangCRAFTConflictResolvedAggregation2026}, along with more recent fairness-aware aggregation methods FedHEAL~\citep{chenFairFederatedLearning2024} and FedPW~\citep{guanRethinkingFairFederated2025}. For methods with tunable hyperparameters, we perform a hyperparameter sweep and use the best performing variant per-dataset. All experiments use the same 5 seeds that correspond to the same data splits for a fair comparison between the baselines. The detailed set of hyperparameter settings is included in App.~\ref{app:reproducibility}.

\subsection{Main Results}
\label{sec:main-results}

Table~\ref{tab:main_results} reports the global accuracy $\bar{a}$, the standard deviation across client accuracies $\sigma_c$, and the worst-$10\%$ client accuracy $\mathrm{W}_{10\%}$ on CIFAR-10, CIFAR-100, and Fashion-MNIST, at $5\%$ and $10\%$ participation and for $\alpha \in \{0.01, 0.1, 1\}$. The table lists every tested $\tau$, while figures use $\tau = 0.02$ and $H=10$ with $\alpha = 0.01$, which is the most heterogeneous regime we test. App.~\ref{app:all-alpha} reports the corresponding figures for the remaining $\alpha$ values. We organize the discussion around three central questions: (1) Is FedPCD more accurate on average? (2) Do the gains of FedPCD reach underrepresented clients? and (3) How quickly does FedPCD converge compared to SOTA baselines?

\subsubsection{Global and Tail Accuracy}
\label{sec:res-accuracy}

FedPCD achieves the best global accuracy in all 18 configurations of Table~\ref{tab:main_results}, and the highest worst-$10\%$ accuracy in 17 of them. The strongest baseline changes with the dataset and with the heterogeneity level, so we discuss the three datasets in turn.

\paragraph{CIFAR-10.} FedLF is the closest competitor at $\alpha = 0.01$, while CRAFT and FedFV are closest at $\alpha = 1$. The margin of FedPCD is largest where the data is most heterogeneous and participation is lowest. At $\alpha = 0.01$ and $5\%$ participation, FedPCD reaches $71.90\%$ global accuracy against $65.59\%$ for FedLF, and its worst-$10\%$ accuracy of $52.88\%$ exceeds that of FedLF by more than $14\%$. Raising participation to $10\%$ narrows the gap in global accuracy to $0.22\%$, but FedPCD keeps the higher global and worst-$10\%$ accuracy and matches FedLF on $\sigma_c$. At $\alpha = 1$, FedPCD leads the strongest baselines by $0.25\%$ and $0.78\%$ at $5\%$ and $10\%$ participation, respectively.

\paragraph{CIFAR-100.} Here, CRAFT is the strongest competitor, for which FedPCD improves the global accuracy by over $1.7\%$ to $5.6\%$ and the worst-$10\%$ accuracy by $0.8\%$ to $5.2\%$ points, with the improvement growing with $\alpha$. At $\alpha = 1$, FedPCD reaches $35.40\%$ and $36.47\%$ global accuracy at $5\%$ and $10\%$ participation, against $29.77\%$ and $31.29\%$ for CRAFT. At $\alpha = 0.01$, the task is hard for every method, and the worst-$10\%$ accuracy stays below $8\%$ for all of them.

\paragraph{Fashion-MNIST.} We observe smaller margins here than on the CIFAR datasets, between $0.6\%$ and $2.2\%$ of global accuracy, since the strongest methods already reach between $85\%$ and $89\%$ on this task. This dataset also contains the single configuration in which FedPCD does not lead in tail accuracy. At $10\%$ participation and $\alpha = 0.01$, FedLF reaches a worst-$10\%$ accuracy of $65.73\%$, whereas the best FedPCD variant reaches $63.22\%$, and FedLF also attains the lower $\sigma_c$ there. FedPCD nevertheless keeps the higher global accuracy in that configuration, at $87.87\%$ against $87.00\%$.

\begin{table}[!t]
\centering
\caption{Results on CIFAR-10, CIFAR-100, and Fashion-MNIST. $\bar{a}$: global accuracy; $\sigma_{c}$: within-run std.\ across client accuracies; $\mathrm{W}_{10\%}$: worst-10\% client accuracy. Within each dataset block, bold marks the best overall value in a column and underline marks the best value from the competing method group, such that the strongest FedPCD configuration and strongest baseline are both highlighted.}
\label{tab:main_results}
\setlength{\tabcolsep}{1.8pt}
\renewcommand{\arraystretch}{1.32}
\scriptsize
\resizebox{\linewidth}{!}{%
\begin{tabular}{lcccccccccccccccccc}
\toprule
Method & \multicolumn{9}{c}{5\% participation} & \multicolumn{9}{c}{10\% participation} \\
\cmidrule(lr){2-10} \cmidrule(lr){11-19}
 & \multicolumn{3}{c}{$\alpha=0.01$} & \multicolumn{3}{c}{$\alpha=0.1$} & \multicolumn{3}{c}{$\alpha=1$} & \multicolumn{3}{c}{$\alpha=0.01$} & \multicolumn{3}{c}{$\alpha=0.1$} & \multicolumn{3}{c}{$\alpha=1$} \\
\cmidrule(lr){2-4} \cmidrule(lr){5-7} \cmidrule(lr){8-10} \cmidrule(lr){11-13} \cmidrule(lr){14-16} \cmidrule(lr){17-19}
 & $\bar{a}\!\uparrow$ & $\sigma_{c}\!\downarrow$ & $\mathrm{W}_{10\%}\!\uparrow$ & $\bar{a}\!\uparrow$ & $\sigma_{c}\!\downarrow$ & $\mathrm{W}_{10\%}\!\uparrow$ & $\bar{a}\!\uparrow$ & $\sigma_{c}\!\downarrow$ & $\mathrm{W}_{10\%}\!\uparrow$ & $\bar{a}\!\uparrow$ & $\sigma_{c}\!\downarrow$ & $\mathrm{W}_{10\%}\!\uparrow$ & $\bar{a}\!\uparrow$ & $\sigma_{c}\!\downarrow$ & $\mathrm{W}_{10\%}\!\uparrow$ & $\bar{a}\!\uparrow$ & $\sigma_{c}\!\downarrow$ & $\mathrm{W}_{10\%}\!\uparrow$ \\
\midrule
\multicolumn{19}{@{}l}{\textbf{CIFAR-10}} \\
\addlinespace[1pt]
FedAvg & 48.09 & 19.55 & 18.74 & 60.96 & 13.68 & 37.08 & 71.93 & 5.78 & 61.68 & 57.47 & 15.93 & 29.93 & 64.43 & 11.67 & 43.27 & 72.94 & 5.65 & 63.28 \\
AFL & 26.82 & 18.50 & 7.19 & 46.20 & 10.83 & 25.93 & 64.08 & 5.64 & 54.42 & 43.86 & 11.37 & 22.16 & 53.79 & 8.97 & 36.99 & 65.49 & 5.53 & 56.18 \\
q-FFL & 50.62 & 17.16 & 24.00 & 60.30 & 13.12 & 37.95 & 71.18 & 5.75 & 61.03 & 56.60 & 15.02 & 30.21 & 63.31 & 11.43 & 42.77 & 71.96 & 5.61 & 62.04 \\
Ditto & 48.25 & 19.45 & 19.19 & 61.00 & 13.71 & 36.95 & 71.84 & 5.88 & 61.27 & 57.48 & 15.90 & 29.98 & 64.42 & 11.65 & 43.16 & 73.03 & 5.63 & 63.34 \\
FedMGDA+ & 39.39 & 13.90 & 16.09 & 44.70 & 11.99 & 23.86 & 55.70 & 6.30 & 44.43 & 40.33 & 13.45 & 15.82 & 45.36 & 10.36 & 26.32 & 55.96 & 6.06 & 44.94 \\
FedFV & 61.45 & 15.46 & 33.88 & 71.46 & 10.42 & 51.05 & 76.09 & \underline{4.99} & \underline{66.91} & 69.90 & 11.01 & 50.23 & 73.98 & 8.48 & 57.20 & \underline{76.73} & \underline{4.95} & \underline{67.90} \\
FedLF & \underline{65.59} & \underline{13.00} & \underline{38.58} & 70.98 & \underline{8.67} & \underline{54.23} & 73.71 & 5.18 & 64.15 & \underline{74.62} & \textbf{8.88} & \underline{56.93} & \underline{75.27} & \underline{7.50} & \underline{61.17} & 72.44 & 5.16 & 62.87 \\
FedHEAL & 47.04 & 20.02 & 17.25 & 59.99 & 13.78 & 35.33 & 71.53 & 5.93 & 60.93 & 56.78 & 16.20 & 29.03 & 64.10 & 12.04 & 41.69 & 73.03 & 5.43 & 63.80 \\
FedPW & 46.91 & 19.72 & 18.66 & 60.68 & 13.72 & 35.81 & 72.16 & 5.80 & 61.67 & 54.56 & 16.92 & 25.35 & 64.97 & 11.67 & 43.27 & 73.42 & 5.58 & 63.51 \\
CRAFT & 65.10 & 15.51 & 31.96 & \underline{72.50} & 10.50 & 51.97 & \underline{76.76} & 5.59 & 66.50 & 73.67 & 11.92 & 48.84 & 74.82 & 8.45 & 58.74 & 75.70 & 5.11 & 66.50 \\
\cmidrule(lr){1-19}
FedPCD$_{\tau=0.005}$ & 71.53 & \textbf{10.12} & 52.85 & 74.35 & 8.30 & 58.46 & 76.26 & 4.88 & 67.67 & 74.37 & 9.24 & 56.55 & 75.13 & 7.79 & 60.34 & 77.18 & 4.90 & 68.67 \\
FedPCD$_{\tau=0.01}$ & \textbf{71.90} & 10.32 & \textbf{52.88} & 74.54 & 8.16 & 58.96 & 76.58 & 5.00 & 67.22 & \textbf{74.84} & \underline{8.89} & \textbf{58.03} & 75.34 & 7.73 & 60.41 & 77.11 & 5.02 & 68.23 \\
FedPCD$_{\tau=0.02}$ & 71.78 & 10.34 & 52.40 & \textbf{74.76} & \textbf{7.88} & \textbf{60.14} & 76.79 & 4.99 & 67.81 & 74.78 & 9.18 & 56.71 & \textbf{75.69} & \textbf{7.45} & \textbf{61.28} & 77.28 & 4.85 & 68.76 \\
FedPCD$_{\tau=0.05}$ & 67.83 & 12.18 & 44.44 & 74.04 & 8.32 & 58.17 & \textbf{77.01} & 4.87 & 68.26 & 73.04 & 10.18 & 53.00 & 75.66 & 7.64 & 61.22 & 77.24 & 4.82 & 68.50 \\
FedPCD$_{\tau=0.075}$ & 66.19 & 13.26 & 42.25 & 72.44 & 8.24 & 57.66 & 76.91 & \textbf{4.80} & \textbf{68.55} & 71.43 & 11.24 & 48.99 & 75.02 & 7.76 & 60.39 & \textbf{77.51} & \textbf{4.68} & \textbf{69.24} \\
FedPCD$_{\tau=0.1}$ & 65.89 & 13.00 & 42.72 & 71.21 & 8.62 & 55.11 & 76.83 & 4.83 & 67.88 & 71.14 & 11.47 & 47.99 & 74.57 & 7.92 & 59.28 & 77.18 & 4.74 & 68.70 \\
\midrule
\multicolumn{19}{@{}l}{\textbf{CIFAR-100}} \\
\addlinespace[1pt]
FedAvg & 9.08 & 9.61 & 0.24 & 14.37 & 6.46 & 5.15 & 19.60 & 4.15 & 12.72 & 11.22 & 9.85 & 0.82 & 16.28 & 6.66 & 6.81 & 19.93 & 4.06 & 13.32 \\
AFL & 5.64 & \textbf{6.72} & 0.00 & 10.50 & \textbf{5.15} & 3.11 & 13.40 & \textbf{3.58} & 7.44 & 7.70 & \textbf{7.12} & 0.20 & 12.28 & \textbf{5.40} & 4.38 & 14.55 & \textbf{3.62} & 8.48 \\
q-FFL & 8.69 & 9.23 & 0.20 & 13.67 & 6.22 & 4.51 & 18.79 & 4.10 & 12.08 & 10.09 & 9.31 & 0.50 & 15.01 & 6.32 & 5.82 & 19.21 & 4.11 & 12.35 \\
Ditto & 8.92 & 9.53 & 0.24 & 14.41 & 6.40 & 5.09 & 19.44 & 4.20 & 12.74 & 11.22 & 9.72 & 0.82 & 16.27 & 6.68 & 6.93 & 20.07 & 4.09 & 13.21 \\
FedMGDA+ & 8.55 & 9.68 & 0.18 & 14.58 & 6.40 & 5.46 & 22.36 & 4.30 & 15.18 & 10.83 & 9.68 & 0.70 & 16.79 & 6.59 & 7.07 & 23.74 & 4.30 & 16.88 \\
FedFV & 14.71 & 11.43 & 1.44 & 22.93 & 7.25 & \underline{12.25} & 27.52 & 4.62 & 19.37 & 18.87 & 11.45 & 3.68 & 25.73 & 7.20 & 14.35 & 28.21 & 4.55 & 20.41 \\
FedLF & 8.32 & 9.94 & 0.00 & 16.61 & 6.34 & 7.33 & 18.05 & 4.03 & 11.62 & 14.75 & 11.29 & 1.10 & 20.27 & 6.18 & 10.45 & 17.82 & 3.90 & 11.26 \\
FedHEAL & 8.79 & 9.99 & 0.24 & 14.61 & 6.50 & 5.54 & 20.30 & 4.15 & 13.53 & 11.65 & 10.14 & 0.82 & 16.71 & 6.75 & 7.14 & 20.85 & 4.07 & 14.14 \\
FedPW & 8.53 & 9.55 & 0.24 & 13.66 & 6.56 & 4.55 & 20.83 & 4.13 & 14.06 & 11.52 & 10.04 & 0.93 & 17.37 & 6.81 & 7.41 & 21.97 & 4.22 & 15.32 \\
CRAFT & \underline{15.46} & 11.32 & \underline{2.05} & \underline{23.24} & 7.38 & 11.60 & \underline{29.77} & 4.58 & \underline{21.66} & \underline{22.60} & 11.49 & \underline{6.34} & \underline{28.69} & 7.01 & \underline{17.53} & \underline{31.29} & 4.62 & \underline{22.99} \\
\cmidrule(lr){1-19}
FedPCD$_{\tau=0.005}$ & 17.74 & 11.88 & \textbf{2.83} & 26.89 & 7.52 & 15.13 & 30.27 & \underline{4.61} & 22.36 & 23.29 & 11.83 & 6.71 & 29.99 & 7.19 & 18.98 & 30.33 & 4.81 & 21.94 \\
FedPCD$_{\tau=0.01}$ & 18.05 & 11.91 & 2.80 & 27.32 & 7.67 & 15.40 & 31.06 & 4.84 & 22.28 & 23.68 & 11.80 & 6.98 & 30.73 & 7.19 & 19.53 & 31.17 & 4.78 & 22.90 \\
FedPCD$_{\tau=0.02}$ & \textbf{18.33} & 11.90 & 2.78 & 28.00 & 7.63 & 16.17 & 32.66 & 4.85 & 23.89 & \textbf{24.34} & 11.89 & \textbf{7.34} & 31.99 & 7.19 & 20.28 & 32.69 & \underline{4.77} & 24.54 \\
FedPCD$_{\tau=0.05}$ & 17.54 & 12.14 & 1.96 & \textbf{28.17} & \underline{7.42} & \textbf{16.66} & \textbf{35.40} & 4.84 & \textbf{26.89} & 23.60 & 11.76 & 6.97 & \textbf{32.54} & 7.29 & \textbf{20.74} & 35.84 & 4.81 & 27.18 \\
FedPCD$_{\tau=0.075}$ & 16.44 & 11.76 & 1.83 & 26.99 & 7.56 & 15.55 & 35.10 & 4.99 & 26.24 & 22.50 & \underline{11.55} & 6.57 & 31.24 & \underline{7.12} & 19.70 & \textbf{36.47} & 4.86 & \textbf{27.70} \\
FedPCD$_{\tau=0.1}$ & 15.25 & \underline{11.70} & 1.34 & 25.36 & 7.59 & 13.41 & 33.58 & 4.86 & 24.77 & 21.51 & \underline{11.55} & 5.94 & 29.62 & 7.25 & 17.90 & 35.30 & 4.80 & 26.95 \\
\midrule
\multicolumn{19}{@{}l}{\textbf{Fashion-MNIST}} \\
\addlinespace[1pt]
FedAvg & 77.98 & 19.59 & 30.40 & 76.75 & 14.66 & 44.73 & 81.27 & 6.25 & 69.09 & 78.63 & 19.62 & 30.49 & 79.33 & 12.62 & 51.38 & 82.82 & 5.23 & 72.74 \\
AFL & 74.92 & 17.03 & 34.49 & 76.40 & 12.90 & 47.94 & 79.87 & 6.23 & 67.37 & 76.92 & 13.70 & 44.66 & 78.39 & 10.24 & 56.78 & 80.33 & 5.80 & 68.95 \\
q-FFL & 77.80 & 19.25 & 31.23 & 76.58 & 14.63 & 44.84 & 81.23 & 5.97 & 69.46 & 77.99 & 19.79 & 29.11 & 79.02 & 12.46 & 51.80 & 82.56 & 5.26 & 72.15 \\
Ditto & 77.98 & 19.59 & 30.40 & 76.75 & 14.66 & 44.73 & 81.27 & 6.25 & 69.09 & 78.63 & 19.62 & 30.49 & 79.33 & 12.62 & 51.38 & 82.82 & 5.23 & 72.74 \\
FedMGDA+ & 72.42 & 23.07 & 18.26 & 75.59 & 14.47 & 44.51 & 79.77 & 6.14 & 67.66 & 74.55 & 21.34 & 22.22 & 76.38 & 12.80 & 49.56 & 80.72 & 5.50 & 70.13 \\
FedFV & 82.99 & 13.39 & 53.63 & 85.37 & 8.59 & 66.46 & 85.54 & 4.52 & 76.94 & 85.06 & 13.12 & 53.53 & 86.32 & 7.76 & 70.37 & 87.13 & 4.28 & 79.01 \\
FedLF & \underline{84.79} & \underline{10.50} & \underline{62.13} & \underline{86.34} & \underline{7.59} & \underline{70.40} & 86.54 & 4.32 & 78.22 & \underline{87.00} & \textbf{9.34} & \textbf{65.73} & \underline{87.27} & \textbf{6.50} & \underline{73.80} & 87.77 & \underline{4.04} & 79.84 \\
FedHEAL & 77.04 & 20.00 & 29.50 & 76.52 & 15.08 & 43.37 & 81.82 & 6.00 & 69.87 & 78.13 & 19.64 & 30.01 & 79.32 & 12.77 & 50.64 & 82.87 & 5.43 & 72.30 \\
FedPW & 78.53 & 19.50 & 33.76 & 81.60 & 11.98 & 55.55 & 87.16 & 4.21 & 78.95 & 81.97 & 15.59 & 45.58 & 82.97 & 10.53 & 61.47 & 87.70 & 4.24 & 79.64 \\
CRAFT & 81.26 & 18.01 & 36.53 & 84.07 & 10.32 & 62.35 & \underline{87.18} & \underline{4.15} & \underline{79.25} & 84.08 & 13.83 & 51.21 & 86.79 & 8.07 & 70.03 & \underline{88.00} & 4.16 & \underline{80.15} \\
\cmidrule(lr){1-19}
FedPCD$_{\tau=0.005}$ & \textbf{86.98} & \textbf{10.21} & \textbf{62.96} & \textbf{87.71} & 7.43 & \textbf{72.05} & 87.56 & 4.05 & 79.91 & 87.61 & 10.85 & 62.08 & \textbf{88.22} & \underline{6.99} & \textbf{73.88} & 88.53 & 3.92 & 80.92 \\
FedPCD$_{\tau=0.01}$ & 86.51 & 10.66 & 61.63 & 87.61 & \textbf{7.38} & 71.85 & \textbf{87.81} & \textbf{3.96} & \textbf{80.39} & \textbf{87.87} & \underline{10.43} & \underline{63.22} & 88.10 & 7.10 & 73.26 & \textbf{88.63} & \textbf{3.91} & \textbf{81.28} \\
FedPCD$_{\tau=0.02}$ & 86.29 & 10.40 & 62.23 & 87.02 & 7.53 & 71.27 & 87.54 & 4.13 & 79.69 & 87.37 & 11.06 & 60.93 & \textbf{88.22} & 7.06 & 73.32 & 88.19 & 3.95 & 80.37 \\
FedPCD$_{\tau=0.05}$ & 85.16 & 11.89 & 57.07 & 86.49 & 7.62 & 70.51 & 87.09 & 4.16 & 79.20 & 87.43 & 11.00 & 61.00 & 87.56 & 7.41 & 71.62 & 87.92 & 4.15 & 80.05 \\
FedPCD$_{\tau=0.075}$ & 84.74 & 11.50 & 58.68 & 86.48 & 8.05 & 69.24 & 86.95 & 4.46 & 78.20 & 87.53 & 10.71 & 62.21 & 87.33 & 7.39 & 71.68 & 87.83 & 4.18 & 79.81 \\
FedPCD$_{\tau=0.1}$ & 84.28 & 12.13 & 56.79 & 86.29 & 8.18 & 68.70 & 86.68 & 4.48 & 78.07 & 87.25 & 11.53 & 59.11 & 87.38 & 7.35 & 71.88 & 87.60 & 4.20 & 79.72 \\
\bottomrule
\end{tabular}
}
\end{table}

\begin{figure*}[h]
    \centering
    \includegraphics[width=0.75\linewidth]{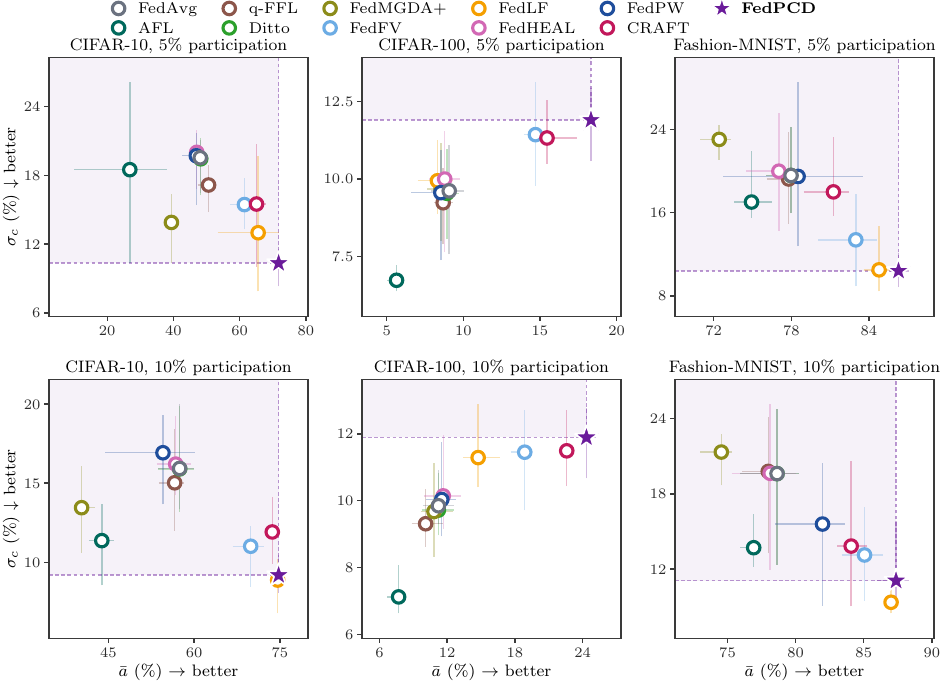}
    \caption{Global accuracy against client-accuracy spread at $\alpha=0.01$ for both participation rates. Higher accuracy and lower $\sigma_c$ are preferred. Crosses indicate variation across the 5 seeds}
    \label{fig:fairness-frontier-alpha0.01}
\end{figure*}

\subsubsection{Client-Level Fairness}
\label{sec:res-fairness}

The global accuracy alone does not show whether the improvements reach every client. The spread $\sigma_c$ cannot answer this alone either, since a method can score well on spread while holding the global accuracy down. AFL illustrates this: it has the lowest $\sigma_c$ in all 6 CIFAR-100 configurations, but its global accuracy ranges from $5.64\%$ to $14.55\%$ against a best of $18.33\%$ to $36.47\%$. A low $\sigma_c$ is therefore only meaningful next to a competitive $\bar{a}$. Fig.~\ref{fig:fairness-frontier-alpha0.01} plots the two together, so that the ideal region is the bottom-right corner of each panel. FedPCD is at least as accurate and at least as even as all 10 baselines on CIFAR-10 and Fashion-MNIST at $5\%$ participation, and at least 9 of the 10 at $10\%$.

On CIFAR-100, FedPCD is the most accurate method by a wider margin, but its $\sigma_c$ is higher than that of the baselines. This is the same effect seen from the other side, since $\sigma_c$ grows with the accuracy it is measured around. The two baselines that come closest to FedPCD in accuracy, CRAFT and FedFV, also have the two highest $\sigma_c$ among the baselines, whereas the methods whose accuracy stays near $10\%$ have a compressed spread. We complement these results with Jain's fairness index~\citep{jain1984quantitative} in App.~\ref{app:fairness}.

Fig.~\ref{fig:client-distribution-all} shows the full distribution of client accuracies. We show every method as a violin over the client accuracies pooled across 5 seeds, ordered by global accuracy. FedPCD has the highest global accuracy in every panel. On CIFAR-10 at $5\%$ participation it attains the highest $\bar{a}$, the lowest $\sigma_c$ and the highest $\mathrm{W}_{10\%}$ at once, and its worst client across the 5 runs reaches $41.0\%$, whereas the worst client of AFL reaches $0.0\%$. Fig.~\ref{fig:client-distribution-comparison} isolates FedPCD, FedAvg, and the strongest non-FedPCD baseline in each setting to compare distribution shapes. Compared to FedAvg, the entire distribution is shifted to the right, with significantly less spread. Compared to the next-best baseline, more of the mass is concentrated in the upper tail. The exception is again FedLF. At $10\%$ participation on CIFAR-10 and Fashion-MNIST, FedLF attains a lower $\sigma_c$ and a higher $\mathrm{W}_{10\%}$ than FedPCD at $\tau = 0.02$ despite a lower $\bar{a}$. The largest gap is on Fashion-MNIST, where FedLF reaches a worst-$10\%$ accuracy of $65.73\%$ against $63.22\%$ for FedPCD at its best $\tau$.

\begin{figure*}[t]
    \centering
    \includegraphics[width=1\linewidth]{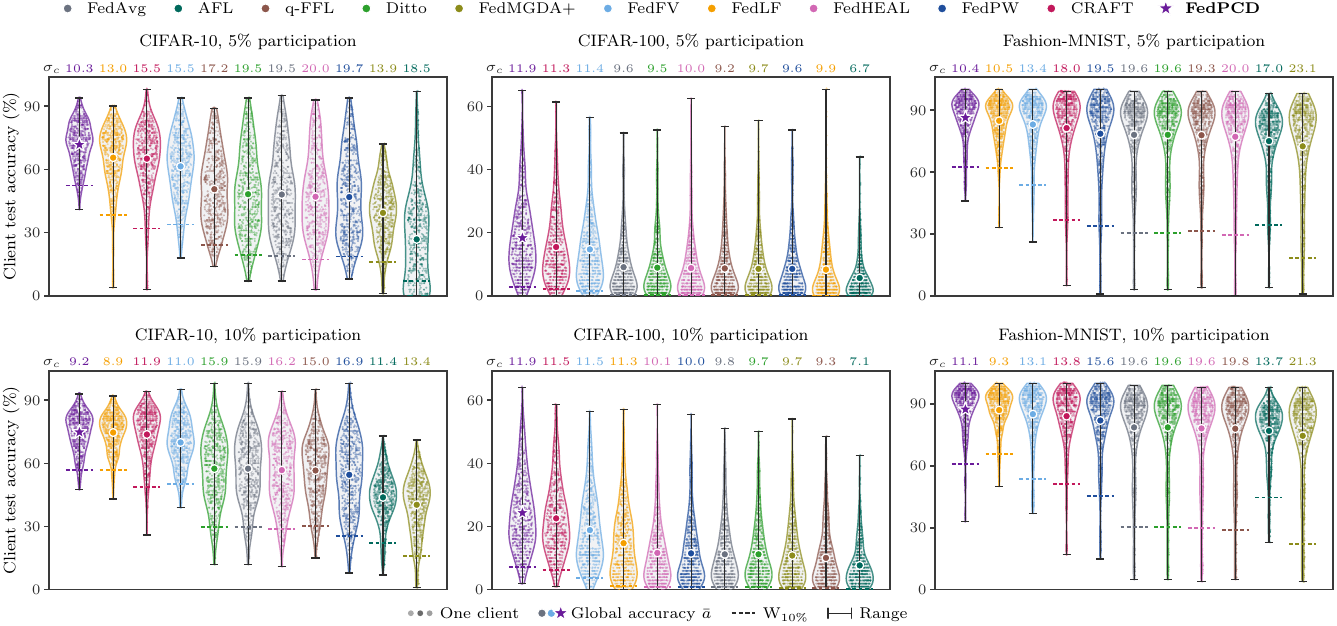}
    \caption{Client test-accuracy distributions at $\alpha=0.01$ across datasets and participation rates. Methods are ordered by global accuracy with each violin representing the kernel density of client accuracies pooled over the 5 seeds.}
    \label{fig:client-distribution-all}
\end{figure*}

\begin{figure*}[t]
    \centering
    \includegraphics[width=0.75\linewidth]{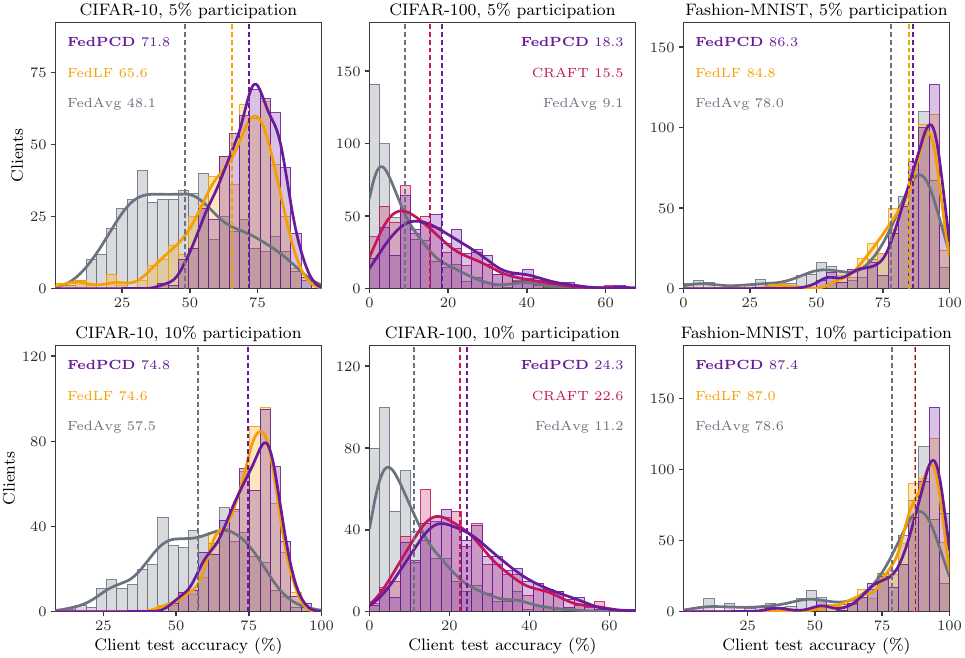}
    \caption{Client test accuracy distributions for FedPCD, FedAvg, and the strongest non-FedPCD baseline at $\alpha = 0.01$. The distribution means are shown as dashed lines and their values are reported per panel.}
    \label{fig:client-distribution-comparison}
\end{figure*}

\subsubsection{Convergence}
\label{sec:res-convergence}

\begin{figure*}[!t]
    \centering
    \includegraphics[width=0.8\linewidth]{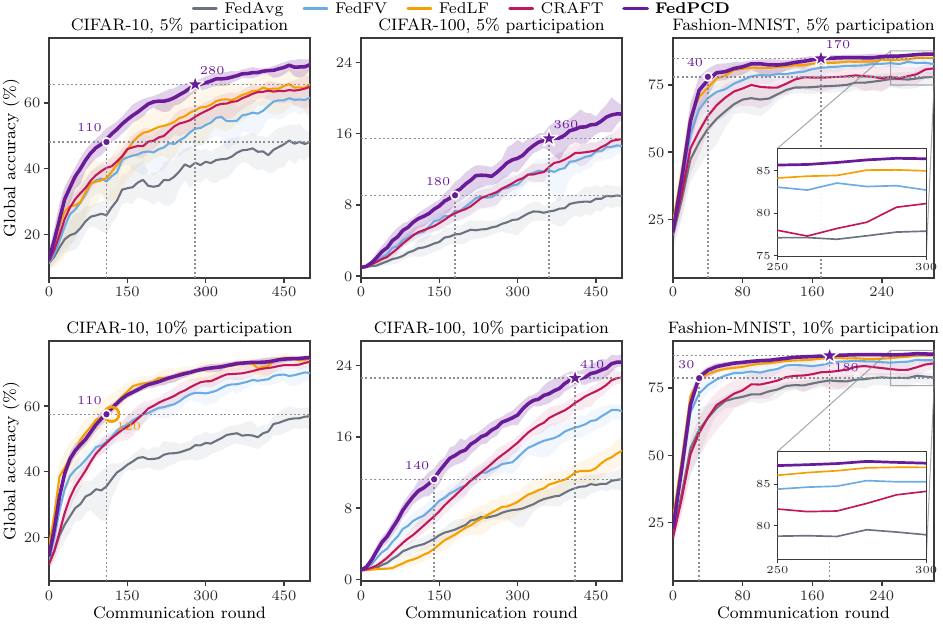}
    \caption{Global accuracy against communication rounds at $\alpha=0.01$. Dots mark when FedPCD reaches FedAvg's final accuracy, while stars mark when it reaches the strongest baseline's final accuracy. When the latter is not reached, a hollow ring marks when that baseline reaches FedAvg's final accuracy.}
    \label{fig:communication-rounds-convergence-alpha0.01}
\end{figure*}

Fig.~\ref{fig:communication-rounds-convergence-alpha0.01} compares the global accuracy over communication rounds against FedAvg and the three strongest baselines under the strongest heterogeneity setting, $\alpha = 0.01$. The recurring pattern is that FedPCD reaches the final
performance levels of both FedAvg and the strongest competing methods significantly earlier in most settings, demonstrating gains in communication efficiency.

\paragraph{CIFAR-10.}
At both participation rates, FedPCD reaches FedAvg's final accuracy by round 110 out of 500, requiring only $22\%$ of the communication budget and corresponding to approximately $4.5\times$ fewer rounds to reach the same accuracy. At $5\%$ participation, FedPCD reaches the strongest baseline's final accuracy by round 280, corresponding to a further $1.8\times$ reduction in communication rounds. The advantage is smaller at $10\%$ participation, where FedLF closely tracks FedPCD and their final accuracies are matched at the end of training.

\paragraph{CIFAR-100.}
The communication efficiency advantage remains relevant in CIFAR-100. FedPCD reaches FedAvg's final accuracy by rounds 180 and 140 at $5\%$ and $10\%$ participation, respectively. This corresponds to approximately $2.8\times$ and $3.6\times$ fewer communication rounds to match FedAvg's performance at 500 rounds. The final accuracy of the strongest baseline is also reached earlier, at rounds 360 and 410, corresponding to 140 and 90 rounds before the end of training and approximately $1.4\times$ and $1.2\times$ better communication efficiency, respectively.

\paragraph{Fashion-MNIST.}
The largest relative reduction in communication rounds occurs on Fashion-MNIST. FedPCD reaches
FedAvg's final accuracy after only 40 of 300 rounds at $5\%$ participation and 30 rounds at $10\%$, corresponding to $7.5\times$ and $10\times$ reductions in communication rounds, respectively. It also reaches the strongest baseline's final accuracy by rounds 170 and 180, respectively, amounting to $1.8\times$ and $1.7\times$ savings in communication budget, respectively.

Overall, FedPCD demonstrates strong communication efficiency across all tested settings (see App.~\ref{app:all-alpha}). It reaches FedAvg's final accuracy using only $10\%$--$44\%$ of the total communication rounds, and achieves the strongest baseline's final accuracy earlier in 16 of the 18 configurations. These results show that FedPCD not only achieves stronger final performance, but also reaches a competitive performance in substantially fewer rounds.

\section{Conclusion}
This work presents FedPCD, a priority-constrained federated optimization framework that separates global federated aggregation (FedAvg) from the protection of individual client objectives. Instead of replacing the global update with a re-weighted or symmetric multi-objective direction, FedPCD retains FedAvg as the primary descent direction and applies the minimal correction required to satisfy client-level constraints. To address partial participation, we incorporate a client history mechanism to preserve recent client directions as secondary constraints. Under deterministic full participation, we establish an $\mathcal{O}(1/T)$ convergence rate, further showing client-level progress toward simultaneous stationarity under persistent feasibility.

Extensive experiments on CIFAR-10, CIFAR-100 and FashionMNIST under 2 participation levels and 3 data heterogeneity levels show superior performance in comparison to 10 tested fairness-aware FL baselines, maintaining the highest global accuracy in all cases, better worst-$10\%$ client accuracy in 17 out of 18 tested configurations, while converging in significantly fewer communication rounds.

Several directions remain open as limitations of the FedPCD framework. Firstly, we use a fixed $\tau$ parameter across all secondary constraints, although the observed sensitivity to heterogeneity suggests that an adaptive client- or round-dependent parameter may further improve our gains. Additionally, the current history mechanism requires the server to observe each client's pseudo-gradient individually rather than only their aggregate, limiting applicability to secure aggregation systems. This motivates privacy-compatible variants of FedPCD. Finally, we note that the evaluation is restricted to $N = 100$ clients and one local epoch, so the behavior of FedPCD under many local steps, where pseudo-gradient directions degrade through client drift, is not established.

\subsubsection*{Acknowledgments}
This research was supported in part by the High-Performance Computing (HPC) environment at the American University of Sharjah. The authors acknowledge the AUS-HPC team for providing the computational infrastructure and technical support.

\bibliography{FedPCD}
\bibliographystyle{tmlr}

\appendix
\section{Reproducibility Details}
\label{app:reproducibility}

\paragraph{Model and Training Details.}
For CIFAR-10, we use a three-layer CNN with $3\times3$ convolutional layers containing 32, 64, and 128 channels. Each convolutional layer is followed by GroupNorm, ReLU activation, and $2\times2$ max pooling, followed by a classification head with dimensions $2048\rightarrow256\rightarrow10$. For CIFAR-100, we use a ResNet-18 with a $3\times3$ stride-one stem and no initial max pooling. Each of the four residual stages contains two residual blocks with 64, 128, 256, and 512 channels, respectively, with GroupNorm used throughout the network. For FashionMNIST, images are flattened before being passed to an MLP with dimensions $784\rightarrow256\rightarrow128\rightarrow10$ and ReLU activations after each hidden layer. The local batch sizes are 128 and 64 for CIFAR-10 and CIFAR-100, respectively, while full-batch local training is used for FashionMNIST. The corresponding local and global learning rates are 0.05, 0.01, and 0.1.

\paragraph{Experimental Control.}
All experiments use five paired random seeds. For a given dataset, heterogeneity setting, and seed, all compared methods use the same client data partition, initial model parameters, sequence of participating clients, minibatch ordering, and client test partitions. Clients are sampled uniformly without replacement at each communication round. Local optimization uses SGD with cross-entropy loss, without momentum or weight decay. All performance metrics are reported using the model obtained at the final communication round.

\paragraph{Baselines Implementation and Hyperparameters.}
The baselines included in our experimental evaluation were selected based on their public implementation availability to support faithful comparisons. For methods with tunable method-specific hyperparameters, we evaluate the grid in Table~\ref{tab:hyperparameters} and report the best performing hyperparameters per dataset after averaging the five seeds. For a fair comparison, the methods that implement similar history-window mechanisms, FedFV \cite{wangFederatedLearningFair2021} and FedLF \cite{panFedLFLayerWiseFair2024}, were both evaluated at $H=10$.

\begin{table}[!h]
\centering
\caption{Hyperparameter search spaces.}
\label{tab:hyperparameters}
\begin{tabular}{ll}
\toprule
Method & Parameters \\
\midrule
AFL & $\eta_{\lambda} \in \{0.01,\,0.1,\,0.5\}$ \\
q-FFL & $q \in \{0.1,\,1,\,10\}$ \\
Ditto & $\lambda \in \{0.1,\,1,\,2\}$ \\
FedMGDA+ & $\epsilon \in \{0.1,\,0.5,\,1\}$ \\
FedFV & $H=10, \quad \alpha \in \{0.1,\,0.3,\, 0.5\}$ \\
FedLF & $H=10$\\
FedHEAL & $\beta = 0.4,\quad \tau_{\mathrm{HEAL}} \in \{0.2,\,0.3,\,0.4\}$ \\
FedPW & $\beta = 0.5,\quad c \in \{0.1,\,0.3,\,0.5\}$ \\
FedPCD & $\tau \in \{0.005,\,0.01,\,0.02,\,0.05,\,0.075,\,0.1\},\quad
H \in \{0,\,1,\,\ldots,\,10\}$ \\
\bottomrule
\end{tabular}
\end{table}

\section{Expanded Theoretical Analysis}
\label{app:theory}

\subsection{Assumptions and Proof Overview}
\label{app:theory_assumptions}

The proof controls the individual client gradients before bounding the global gradient. This is necessary because the global gradient can be small through cancellation among client gradients, whereas simultaneous client stationarity requires every client gradient to be small. We first show that the alignment condition prevents such cancellation among the pseudo-gradients. We then bound the error from local training and establish a decrease in the global objective proportional to the squared average of the client gradient norms. Summing this decrease proves both Theorem~\ref{thm:fedpcd_stationarity} and Corollary~\ref{cor:fedpcd_cms}.

\paragraph{Setting and regularity.}
We analyze the updates in Equations~\ref{eq:fedpcd_client_direction}--\ref{eq:fedpcd_global_update} under full participation, deterministic local gradient descent, a common fixed number $K\geq1$ of local steps, and $H=0$. The weights $p_i>0$ are fixed and sum to one. Each $F_i$ has an $L$-Lipschitz gradient, with $L>0$, and $F$ has a finite lower bound $F_\star$. Thus, $F$ is also $L$-smooth. The local and server learning rates are positive constants satisfying the conditions of Theorem~\ref{thm:fedpcd_stationarity}; they do not depend on the number of rounds $T$. We require neither convexity nor a uniform bound on the local gradients. Stochastic gradients, partial participation, and historical directions are outside this theorem's scope.

\paragraph{Feasibility and uniform alignment.}
Recall that $\mathcal I_t=\{i:g_{i,t}\neq0\}$ denotes the clients with nonzero pseudo-gradients. At every round with $\mathcal I_t\neq\varnothing$, we assume that the QP is feasible and its solution satisfies Equation~\ref{eq:main_alignment_margin} for every $i\in\mathcal I_t$, with the same $\gamma\in(0,1]$ throughout the run. This condition bounds the cosine of the angle between the QP solution and each nonzero client pseudo-gradient away from zero. The constant $\gamma$ describes the resulting geometry and is not a user-selected parameter.

Feasibility and uniform alignment play different roles. Since $g_{i,t}$ and $\tilde g_{i,t}$ have the same direction, the QP constraint implies
\[
    \frac{g_{i,t}^{\top}\tilde d_t^\star}
         {\lVert g_{i,t}\rVert\lVert\tilde d_t^\star\rVert}
    \geq
    \frac{\tau\lVert\tilde g_{i,t}\rVert}
         {\lVert\tilde d_t^\star\rVert}>0,
    \qquad i\in\mathcal I_t.
\]
Thus, feasibility with $\tau>0$ gives positive alignment at the current round, but does not provide a positive lower bound valid for all rounds. Equation~\ref{eq:main_alignment_margin} supplies this additional bound. It is a substantive restriction on the client directions, not a consequence of smoothness. In particular, two nonzero directions that point exactly opposite to one another cannot both satisfy positive progress constraints. The theorem, therefore, does not apply to arbitrary heterogeneous objectives without this geometric condition.

\subsection{Supporting Results}
\label{app:theory_supporting}

\begin{lemma}[Client alignment and aggregate magnitude]
\label{lem:theory_geometry}
Write
\[
    G_t=\sum_{i=1}^{N}p_i\lVert g_{i,t}\rVert.
\]
Here $G_t$ is the weighted average of the client pseudo-gradient magnitudes, whereas $\lVert\bar g_t\rVert$ is the magnitude of their weighted aggregate. Under the alignment condition,
\begin{align}
    g_{i,t}^{\top}d_t
    &\geq \gamma\lVert g_{i,t}\rVert\lVert\bar g_t\rVert,
    \qquad \lVert d_t\rVert=\lVert\bar g_t\rVert,
    \label{eq:theory_client_progress}\\
    \gamma G_t&\leq\lVert\bar g_t\rVert\leq G_t.
    \label{eq:theory_aggregate_norm}
\end{align}
In particular, aggregation cannot make the update arbitrarily small relative to the average client pseudo-gradient magnitude while $\gamma$ remains fixed.
\end{lemma}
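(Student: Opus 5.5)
The proof has three short steps, taken in order: the norm identity, the per-client progress bound, and the two-sided bound on the aggregate.

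\emph{Norm identity.} The rescaling in Equation~\ref{eq:fedpcd_rescaling} gives $d_t=(\lVert\bar g_t\rVert/\lVert\tilde d_t^\star\rVert)\,\tilde d_t^\star$. Its norm is therefore $\lVert\bar g_t\rVert$, provided $\tilde d_t^\star\neq0$. This holds whenever $\mathcal I_t\neq\varnothing$: a feasible $\tilde d$ must satisfy $\tilde g_{i,t}^\top\tilde d\geq\tau\lVert\tilde g_{i,t}\rVert^2>0$ for some $i$. The degenerate case $\mathcal I_t=\varnothing$ is handled separately: there all $g_{i,t}=0$, so $\bar g_t=0$, $G_t=0$, and every claimed relation holds trivially, with $d_t:=0$ by convention.

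\emph{Client progress.} For $i\in\mathcal I_t$, $d_t$ is a positive multiple of $\tilde d_t^\star$. So the cosine in Equation~\ref{eq:main_alignment_margin} is unchanged when $\tilde d_t^\star$ is replaced by $d_t$. Multiplying through by $\lVert g_{i,t}\rVert\lVert d_t\rVert=\lVert g_{i,t}\rVert\lVert\bar g_t\rVert$ gives Equation~\ref{eq:theory_client_progress}. For $i\notin\mathcal I_t$ both sides are zero.

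\emph{Aggregate bounds.} The upper bound $\lVert\bar g_t\rVert\leq G_t$ is the triangle inequality applied to $\bar g_t=\sum_i p_i g_{i,t}$; under full participation $\bar p_{i,t}=p_i$. The lower bound is the main step, and I would obtain it by testing against $d_t$. By Cauchy--Schwarz and the norm identity,
\[
\lVert\bar g_t\rVert^2=\lVert\bar g_t\rVert\lVert d_t\rVert\geq \bar g_t^\top d_t=\sum_i p_i\, g_{i,t}^\top d_t\geq \gamma\lVert\bar g_t\rVert\sum_i p_i\lVert g_{i,t}\rVert=\gamma G_t\lVert\bar g_t\rVert ,
\]
where the last inequality uses the client-progress bound summed with weights $p_i\geq0$. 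If $\lVert\bar g_t\rVert>0$, dividing gives $\gamma G_t\leq\lVert\bar g_t\rVert$.

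The case $\bar g_t=0$ with $\mathcal I_t\neq\varnothing$ is the only delicate point. There $d_t=0$, so the chain above reduces to $0\ge0$ and yields nothing. I would instead work directly with the unit vector $u=\tilde d_t^\star/\lVert\tilde d_t^\star\rVert$, which is well defined because $\tilde d_t^\star\neq0$. Then $0=\bar g_t^\top u=\sum_i p_i g_{i,t}^\top u\geq\gamma G_t>0$, a contradiction, since $p_i>0$ and at least one $g_{i,t}\neq0$. So this case cannot occur: alignment rules out exact cancellation. This also explains the closing remark of the lemma, that aggregation cannot shrink the update arbitrarily relative to the average client pseudo-gradient magnitude.
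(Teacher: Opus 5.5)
Your proof is correct and follows the paper's argument. The paper works with the unit vector $u_t=\tilde d_t^\star/\lVert\tilde d_t^\star\rVert$ from the start, so the chain $\gamma G_t\le\bar g_t^\top u_t\le\lVert\bar g_t\rVert$ gives the lower bound and $\lVert\bar g_t\rVert>0$ at once, and client progress then follows from $d_t=\lVert\bar g_t\rVert u_t$; your final-paragraph argument is exactly this and already covers the general case, so you can drop the detour through $d_t$ (including the claim that $d_t$ is a \emph{positive} multiple of $\tilde d_t^\star$, made before $\lVert\bar g_t\rVert>0$ is established).
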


\begin{proof}
First suppose $\mathcal I_t\neq\varnothing$. A positive right-hand side in at least one QP constraint excludes $\tilde d_t^\star=0$. Define the unit direction $u_t=\tilde d_t^\star/\lVert\tilde d_t^\star\rVert$. Equation~\ref{eq:main_alignment_margin} gives $g_{i,t}^{\top}u_t\geq \gamma\lVert g_{i,t}\rVert$. Multiplying by $p_i$ and summing yields
\[
    \gamma G_t
    \leq\sum_i p_i g_{i,t}^{\top}u_t
    =\bar g_t^{\top}u_t
    \leq\lVert\bar g_t\rVert,
\]
where the last inequality uses Cauchy--Schwarz and $\lVert u_t\rVert=1$. The triangle inequality gives the upper bound $\lVert\bar g_t\rVert\leq G_t$. Since $p_i>0$ and at least one pseudo-gradient is nonzero, the lower bound implies $\lVert\bar g_t\rVert>0$.

The rescaling in Equation~\ref{eq:fedpcd_rescaling} gives $d_t=\lVert\bar g_t\rVert u_t$. Multiplying the client alignment inequality by $\lVert\bar g_t\rVert$ proves Equation~\ref{eq:theory_client_progress}, including the norm identity. If all pseudo-gradients are zero, then $G_t=\lVert\bar g_t\rVert=0$ and $d_t=0$, so both conclusions hold directly.
\end{proof}

\begin{lemma}[Relative local drift]
\label{lem:theory_drift}
Let $r$ be the dimensionless quantity defined in Equation~\ref{eq:main_drift_factors}. Each pseudo-gradient satisfies
\begin{equation}
    g_{i,t}=K\nabla F_i(\theta_t)+e_{i,t},
    \qquad
    \lVert e_{i,t}\rVert
    \leq Kr\lVert\nabla F_i(\theta_t)\rVert.
    \label{eq:theory_relative_drift}
\end{equation}
If $r<1$, then
\begin{equation}
    K(1-r)\lVert\nabla F_i(\theta_t)\rVert
    \leq\lVert g_{i,t}\rVert
    \leq K(1+r)\lVert\nabla F_i(\theta_t)\rVert.
    \label{eq:theory_relative_norms}
\end{equation}
The drift bound is relative to the client's gradient at the round-start model. Consequently, this error vanishes as that gradient vanishes, even with a fixed local learning rate.
\end{lemma}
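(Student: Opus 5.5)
The plan is to unroll the $K$ deterministic local steps and control how far the local iterates move from $\theta_t$, with that displacement expressed relative to $\lVert\nabla F_i(\theta_t)\rVert$. Write $a_k=\lVert\nabla F_i(\theta_{i,t}^{(k)})-\nabla F_i(\theta_t)\rVert$ for $k=0,\ldots,K-1$. Since the local gradients are exact, the pseudo-gradient in Equation~\ref{eq:fedpcd_client_direction} satisfies
\[
    g_{i,t}=\sum_{k=0}^{K-1}\nabla F_i(\theta_{i,t}^{(k)})
    =K\nabla F_i(\theta_t)+e_{i,t},
    \qquad
    e_{i,t}=\sum_{k=0}^{K-1}\bigl(\nabla F_i(\theta_{i,t}^{(k)})-\nabla F_i(\theta_t)\bigr),
\]
so that $\lVert e_{i,t}\rVert\leq\sum_k a_k$. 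The task therefore reduces to the per-step bound
\[
    a_k\leq\bigl[(1+L\eta_\ell)^k-1\bigr]\lVert\nabla F_i(\theta_t)\rVert.
\]
Summing it over $k$ and using the definition of $r$ in Equation~\ref{eq:main_drift_factors} gives exactly $\lVert e_{i,t}\rVert\leq Kr\lVert\nabla F_i(\theta_t)\rVert$.

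I would prove the per-step bound by induction on $k$. The base case is $a_0=0$. By $L$-smoothness,
\[
    a_k\leq L\lVert\theta_{i,t}^{(k)}-\theta_t\rVert
    =L\eta_\ell\Bigl\lVert\sum_{j<k}\nabla F_i(\theta_{i,t}^{(j)})\Bigr\rVert
    \leq L\eta_\ell\sum_{j<k}\bigl(\lVert\nabla F_i(\theta_t)\rVert+a_j\bigr).
\]
Write $b_k=a_k/\lVert\nabla F_i(\theta_t)\rVert$; if $\nabla F_i(\theta_t)=0$, then the iterates never move and everything is zero. The recursion becomes $b_k\leq L\eta_\ell\sum_{j<k}(1+b_j)$. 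Assume inductively that $1+b_j\leq(1+L\eta_\ell)^j$ for all $j<k$. Then the geometric sum gives
\[
    L\eta_\ell\sum_{j<k}(1+L\eta_\ell)^j=(1+L\eta_\ell)^k-1,
\]
which closes the induction. This discrete Grönwall step is the only real obstacle. The care needed is to bound the accumulated displacement through the triangle inequality, so that the recursion involves $\sum_j(1+b_j)$ rather than a cruder quantity; otherwise the exact constant $(1+L\eta_\ell)^k-1$ would not telescope out.

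Finally, for $r<1$, the two-sided norm bound in Equation~\ref{eq:theory_relative_norms} follows from the reverse triangle inequality and the triangle inequality applied to $g_{i,t}=K\nabla F_i(\theta_t)+e_{i,t}$ together with $\lVert e_{i,t}\rVert\leq Kr\lVert\nabla F_i(\theta_t)\rVert$. The closing remark of the lemma is then immediate: the error bound is proportional to $\lVert\nabla F_i(\theta_t)\rVert$, with a factor $r$ that depends only on $L$, $\eta_\ell$ and $K$, so the error vanishes with that gradient even at a fixed local learning rate.
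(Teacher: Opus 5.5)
Your proposal is correct and follows essentially the paper's argument. The paper applies the same discrete Gr\"onwall bound to the displacement $z_k=\lVert\theta_{i,t}^{(k)}-\theta_t\rVert$, using the one-step recurrence $z_{k+1}\leq(1+L\eta_\ell)z_k+\eta_\ell\lVert\nabla F_i(\theta_t)\rVert$ followed by $\lVert e_{i,t}\rVert\leq L\sum_k z_k$. You instead induct directly on the gradient deviations $a_k$ over the full history, which is only a bookkeeping change and gives the identical constant $(1+L\eta_\ell)^k-1$; your handling of the two-sided norm bound and of the case $\nabla F_i(\theta_t)=0$ matches the paper as well.
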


\begin{proof}
Fix a client and round, and abbreviate $a=\lVert\nabla F_i(\theta_t)\rVert$ and $z_k=\lVert\theta_{i,t}^{(k)}-\theta_t\rVert$. Smoothness gives $\lVert\nabla F_i(\theta_{i,t}^{(k)})\rVert\leq a+Lz_k$. Applying the triangle inequality to the local update therefore yields
\[
    z_{k+1}
    \leq z_k+\eta_\ell(a+Lz_k)
    =(1+L\eta_\ell)z_k+\eta_\ell a.
\]
Starting from $z_0=0$ and unrolling this recurrence,
\begin{equation}
    z_k\leq\eta_\ell a\sum_{s=0}^{k-1}(1+L\eta_\ell)^s
    =\frac{(1+L\eta_\ell)^k-1}{L}\,a.
    \label{eq:theory_local_displacement}
\end{equation}
The formula includes $k=0$, with the empty sum interpreted as zero.

In the deterministic setting, the pseudo-gradient is the sum of the $K$ local gradients. Subtracting $K\nabla F_i(\theta_t)$ and using smoothness once more gives
\begin{align*}
    \lVert e_{i,t}\rVert
    &=\left\lVert\sum_{k=0}^{K-1}
      \bigl[\nabla F_i(\theta_{i,t}^{(k)})-\nabla F_i(\theta_t)\bigr]
      \right\rVert\\
    &\leq L\sum_{k=0}^{K-1}z_k\\
    &\leq a\sum_{k=0}^{K-1}\bigl[(1+L\eta_\ell)^k-1\bigr]
     =Kra.
\end{align*}
Applying the triangle and reverse triangle inequalities to $g_{i,t}=K\nabla F_i(\theta_t)+e_{i,t}$ proves Equation~\ref{eq:theory_relative_norms}. These inequalities also cover $a=0$, in which case the deterministic local trajectory remains at $\theta_t$. When $r<1$, Equation~\ref{eq:theory_relative_norms} further shows that $g_{i,t}=0$ if and only if $\nabla F_i(\theta_t)=0$.
\end{proof}

\subsection{Proof of the Finite-Time Convergence Rate}
\label{app:theory_convergence}

\begin{proof}[Proof of Theorem~\ref{thm:fedpcd_stationarity}]
Define the weighted average of the client gradient norms by
\begin{equation}
    S_t=\sum_{i=1}^{N}p_i\lVert\nabla F_i(\theta_t)\rVert.
    \label{eq:theory_St}
\end{equation}
The proof bounds $S_t^2$, which controls both global and client stationarity. Recall from Equation~\ref{eq:main_drift_factors} that
$\gamma_{\mathrm{eff}}=\gamma(1-r)-r$. The assumption $\gamma_{\mathrm{eff}}>0$ is equivalent to $r<\gamma/(1+\gamma)$, and therefore implies $r<1$.

\paragraph{From pseudo-gradient alignment to true-gradient alignment.}
The local error can reduce the client alignment by at most $\lVert e_{i,t}\rVert\lVert d_t\rVert$. Using both lemmas gives
\begin{align}
    \nabla F_i(\theta_t)^{\top}d_t
    &=\frac{1}{K}(g_{i,t}-e_{i,t})^{\top}d_t\nonumber\\
    &\geq\frac{\gamma}{K}\lVert g_{i,t}\rVert\lVert\bar g_t\rVert
       -r\lVert\nabla F_i(\theta_t)\rVert\lVert\bar g_t\rVert\nonumber\\
    &\geq\bigl[\gamma(1-r)-r\bigr]
       \lVert\nabla F_i(\theta_t)\rVert\lVert\bar g_t\rVert\nonumber\\
    &=\gamma_{\mathrm{eff}}\lVert\nabla F_i(\theta_t)\rVert\lVert\bar g_t\rVert.
    \label{eq:theory_true_progress}
\end{align}
Thus, $\gamma_{\mathrm{eff}}$ is the alignment left after accounting for local drift. Because the model moves along $-d_t$, positive alignment means a negative directional derivative of the client loss.

Weighting Equation~\ref{eq:theory_relative_norms} by $p_i$ and summing gives $K(1-r)S_t\leq G_t\leq K(1+r)S_t$. Combining this with Equation~\ref{eq:theory_aggregate_norm} yields
\begin{equation}
    \gamma K(1-r)S_t\leq\lVert\bar g_t\rVert\leq K(1+r)S_t.
    \label{eq:theory_aggregate_norm_bounds}
\end{equation}
Similarly, weighting Equation~\ref{eq:theory_true_progress} and summing gives
\begin{equation}
    \nabla F(\theta_t)^{\top}d_t\geq \gamma_{\mathrm{eff}}S_t\lVert\bar g_t\rVert.
    \label{eq:theory_global_progress}
\end{equation}

\paragraph{A decrease proportional to $S_t^2$.}
Smoothness of $F$ and $\lVert d_t\rVert=\lVert\bar g_t\rVert$ imply
\begin{align}
    F(\theta_{t+1})
    &\leq F(\theta_t)-\eta_g\gamma_{\mathrm{eff}}S_t\lVert\bar g_t\rVert
          +\frac{L\eta_g^2}{2}\lVert\bar g_t\rVert^2\nonumber\\
    &\leq F(\theta_t)-\eta_g
       \left[\gamma_{\mathrm{eff}}-\frac{L\eta_gK(1+r)}{2}\right]S_t\lVert\bar g_t\rVert.
    \label{eq:theory_descent_pre}
\end{align}
The second line uses $\lVert\bar g_t\rVert^2\leq K(1+r)S_t\lVert\bar g_t\rVert$, which follows from the upper bound in Equation~\ref{eq:theory_aggregate_norm_bounds}. The server learning-rate condition in Equation~\ref{eq:fedpcd_server_stepsize} makes the bracket at least $\gamma_{\mathrm{eff}}/2$. Applying the lower bound in Equation~\ref{eq:theory_aggregate_norm_bounds} then gives
\begin{equation}
    F(\theta_{t+1})\leq F(\theta_t)
    -\frac{\eta_g\gamma_{\mathrm{eff}}\gamma K(1-r)}{2}S_t^2.
    \label{eq:theory_descent}
\end{equation}
The rescaling in Equation~\ref{eq:fedpcd_rescaling} makes the smoothness remainder shrink with the client gradient magnitudes, allowing it to be absorbed into the descent term.

\paragraph{Summation over communication rounds.}
Summing Equation~\ref{eq:theory_descent} cancels the intermediate objective values. Since $F(\theta_T)\geq F_\star$,
\begin{equation}
    \sum_{t=0}^{T-1}S_t^2
    \leq\frac{2[F(\theta_0)-F_\star]}
    {\eta_g\gamma_{\mathrm{eff}}\gamma K(1-r)}.
    \label{eq:theory_summability}
\end{equation}
Finally, the triangle inequality gives $\lVert\nabla F(\theta_t)\rVert =\lVert\sum_i p_i\nabla F_i(\theta_t)\rVert\leq S_t$. Divide Equation~\ref{eq:theory_summability} by $T$ to obtain Equation~\ref{eq:fedpcd_convergence}. Its coefficient is independent of $T$ when the learning rates and the uniform alignment margin are fixed.
\end{proof}

\paragraph{Constant local learning rates.}
For $K=1$, Equation~\ref{eq:main_drift_factors} gives $r=0$ and $\gamma_{\mathrm{eff}}=\gamma$. For $K>1$ and a given positive uniform margin $\gamma$, a sufficient numerical condition for $\gamma_{\mathrm{eff}}\geq\gamma/2$ is
\begin{equation}
    0<\eta_\ell\leq
    \frac{\log\!\left(1+\frac{\gamma}{2(1+\gamma)}\right)}
    {L(K-1)}.
    \label{eq:theory_local_stepsize}
\end{equation}
Indeed, each term in the average defining $r$ is at most its last term, so
\[
    r\leq(1+L\eta_\ell)^{K-1}-1
      \leq e^{L\eta_\ell(K-1)}-1
      \leq\frac{\gamma}{2(1+\gamma)}.
\]
Substituting into $\gamma_{\mathrm{eff}}=\gamma-(1+\gamma)r$ proves the claim. Thus, the drift restriction does not require a local learning rate that decreases with $T$. This choice controls local drift; it does not itself establish feasibility or the uniform margin, which remain assumptions on the resulting trajectory.

\paragraph{Scope of client protection.}
Equation~\ref{eq:theory_true_progress} guarantees a descent direction for each nonstationary client. For a finite server step, smoothness gives
\begin{equation}
    F_i(\theta_{t+1})-F_i(\theta_t)
    \leq-\eta_g\gamma_{\mathrm{eff}}\lVert\nabla F_i(\theta_t)\rVert\lVert\bar g_t\rVert
        +\frac{L\eta_g^2}{2}\lVert\bar g_t\rVert^2.
    \label{eq:theory_client_loss}
\end{equation}
For a nonstationary client, $\lVert\bar g_t\rVert>0$ and this upper bound is negative whenever $0<\eta_g<2\gamma_{\mathrm{eff}}\lVert\nabla F_i(\theta_t)\rVert/(L\lVert\bar g_t\rVert)$. The theorem's fixed server step guarantees global objective descent; it need not meet this additional condition for every client at every round. Client protection here therefore means directional progress, not monotonic reduction of every client loss or a guarantee about client accuracy.

\subsection{Client Stationarity and Composite Multi-Stationarity}
\label{app:theory_cms}

\begin{proof}[Proof of Corollary~\ref{cor:fedpcd_cms}]
Let $p_{\min}=\min_i p_i>0$. When the square of $S_t$ is expanded, its cross terms are nonnegative. Hence,
\begin{align*}
    S_t^2
    &\geq\sum_i p_i^2\lVert\nabla F_i(\theta_t)\rVert^2\\
    &\geq p_{\min}\sum_i p_i\lVert\nabla F_i(\theta_t)\rVert^2,
\end{align*}
where the second inequality uses $p_i^2\geq p_{\min}p_i$. Combining this with Equation~\ref{eq:theory_summability} gives
\begin{equation}
    \frac{1}{T}\sum_{t=0}^{T-1}\sum_i p_i\lVert\nabla F_i(\theta_t)\rVert^2
    \leq\frac{2[F(\theta_0)-F_\star]}
    {p_{\min}\eta_g\gamma_{\mathrm{eff}}\gamma K(1-r)T}.
    \label{eq:theory_client_stationarity_bound}
\end{equation}
This proves the finite-time statement for fixed client weights.

For an infinite run with the same constants, Equation~\ref{eq:theory_summability} bounds every partial sum of the nonnegative series $\sum_t S_t^2$. The series is therefore finite, implying $S_t\to0$. This uses summability, not merely a vanishing average. Since
$p_i\lVert\nabla F_i(\theta_t)\rVert\leq S_t$ and $p_i>0$,
\begin{equation}
    \lVert\nabla F_i(\theta_t)\rVert\longrightarrow0,
    \qquad i=1,\ldots,N.
    \label{eq:theory_client_vanishing}
\end{equation}
If a subsequence $\theta_{t_j}$ converges to $\theta^\star$, continuity gives
$\nabla F_i(\theta^\star)=\lim_j\nabla F_i(\theta_{t_j})=0$ for every client.
Since $\nabla F=\sum_i p_i\nabla F_i$, the primary global objective is stationary there as well. This is CMS in the smooth setting considered here \citep{varamNotAllObjectives2026}.
\end{proof}

\paragraph{What the limit statement guarantees.}
Bounded iterates ensure that at least one accumulation point exists. Without boundedness, the result still proves that every client gradient vanishes, but it does not assert the existence of a finite limiting parameter. Even with boundedness, the proof does not require convergence of the entire parameter sequence to a single point.

\paragraph{Why CMS is stronger than Pareto stationarity.}
For the client objectives, Pareto stationarity means that there exist weights $\alpha_i\geq0$, with $\sum_i\alpha_i=1$, such that
$\sum_i\alpha_i\nabla F_i(\theta)=0$.
CMS requires each gradient in this sum to be zero individually. Consequently,
\begin{equation}
    \underbrace{\{\theta:\nabla F_i(\theta)=0\ \forall i\}}_{\text{CMS}}
    \subseteq
    \underbrace{\{\theta:\nabla F(\theta)=0\}}_{\text{global stationarity}}
    \subseteq
    \underbrace{\{\theta:0\in\operatorname{conv}\{\nabla F_i(\theta)\}_{i=1}^{N}\}}_{\text{Pareto stationarity}}.
    \label{eq:theory_stationarity_sets}
\end{equation}
The first inclusion follows from the definition of $F$; the second follows by choosing $\alpha_i=p_i$. Nonzero gradients can cancel, so the reverse implications need not hold. The uniform alignment condition prevents this cancellation from invalidating the client-gradient bound along the analyzed trajectory. Accordingly, the CMS conclusion is conditional on that geometry; it does not establish that every collection of heterogeneous client objectives admits a common stationary point.

\subsection{Comparative Analysis with CRAFT}
\label{app:craft_comparison}
In this section, we analytically compare FedPCD to CRAFT~\citep{wangCRAFTConflictResolvedAggregation2026}, which we identify as the closest methodological neighbor in terms of the implementation and algorithmic steps. Both methods correct a reference direction according to per-client alignment requirements. CRAFT prescribes exact client-alignment values, whereas FedPCD imposes lower bounds on client progress. For completeness, we provide a visual demonstration of one gradient step involving 3 clients in Fig.~\ref{fig:comparisons}. We also include FedMGDA+, FedFV, and FedLF as all these methods apply a similar modification to the aggregation step. Each panel resolves the same three client updates, with $\tilde g_{3,t}$ (red) in conflict with the other two. FedMGDA+, FedFV, and FedLF all reshape the aggregate itself through min-norm search, pairwise projection, or the addition of fairness gradient to the hull, respectively. CRAFT corrects its reference direction to satisfy prescribed per-client rates $\rho_t$. FedPCD, on the other hand, maintains the FedAvg-like direction $\tilde g_{0,t}$ as the primary objective and imposes per-client constraints $\tilde g_{i,t}^{\top}\tilde d \ge \tau\lVert\tilde g_{i,t}\rVert^2$. In the illustrated case, FedPCD projects $\tilde g_{0,t}$ onto the feasible polyhedron $\mathcal{F}_{\tau}$ (shaded). Since only the third client constraint is active, the solution takes the form $\tilde d_t^\star=\tilde g_{0,t}+\mu_3^\star\tilde g_{3,t}$.

\begin{figure*}[h]
    \centering
    \includegraphics[width=\linewidth]{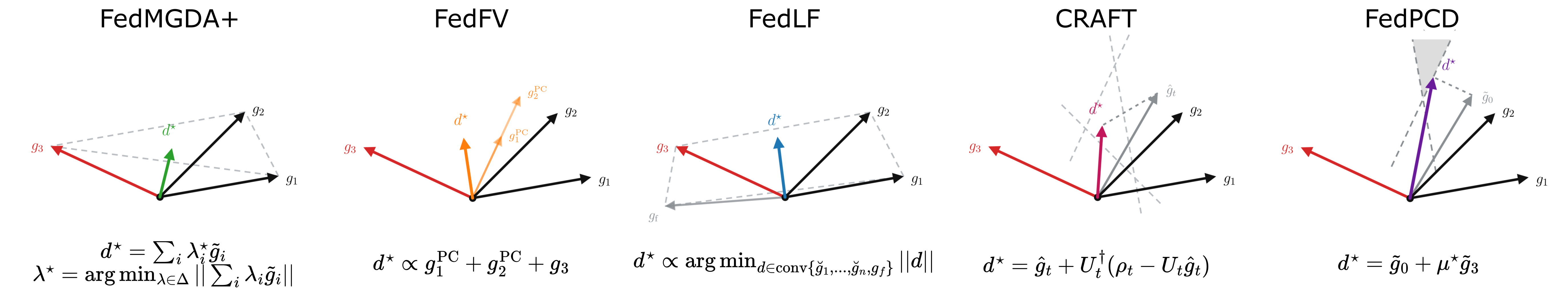}
    \caption{Comparison of FedPCD to its closest fair FL neighbors used as baselines in this study.}
    \label{fig:comparisons}
\end{figure*}

\paragraph{Inequality versus exact alignment.}
To isolate the difference between the constraint formulations from the choice of reference direction, hold the reference $v\in\mathbb R^d$, matrix $U\in\mathbb R^{m\times d}$, and targets $b\in(0,\infty)^m$ fixed. Define

$$
C_{\geq}=\{\tilde d:U\tilde d\geq b\}, \qquad C_{=}=\{\tilde d:U\tilde d=b\}.
$$

Let $\tilde d_{\geq}$ and $\tilde d_{=}$ be the respective Euclidean projections of $v$ onto these sets, whenever they are nonempty. The comparison below concerns these two projection problems with identical inputs. It is a matched-reference and matched-target comparison of the constraint geometries.

\begin{proposition}[Feasibility and minimum correction]
\label{prop:theory_formulation}
If $C_{=}$ is nonempty, so is $C_{\geq}$, and
\begin{equation}
\lVert\tilde d_{\geq}-v\rVert^2+\lVert\tilde d_{=}-\tilde d_{\geq}\rVert^2\leq\lVert\tilde d_{=}-v\rVert^2.
\label{eq:theory_projection_comparison}
\end{equation}
Thus, the inequality projection needs no larger correction to the reference. Its correction is strictly smaller if the two projected points differ.
\end{proposition}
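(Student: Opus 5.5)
The argument has two parts: feasibility is immediate, and the norm inequality is the projection (variational) inequality for a closed convex set, applied to $C_{\geq}$ with the test point $\tilde d_{=}$.

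\textbf{Feasibility.} Every $\tilde d$ with $U\tilde d=b$ also satisfies $U\tilde d\ge b$, so $C_{=}\subseteq C_{\geq}$. Hence $C_{\geq}$ is nonempty whenever $C_{=}$ is. Both sets are closed and convex, since each is an intersection of finitely many closed half-spaces or hyperplanes. The Euclidean projections of $v$ onto them therefore exist and are unique, so $\tilde d_{\geq}$ and $\tilde d_{=}$ are well defined.

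\textbf{The norm inequality.} For a closed convex set $C$, the projection $P_C(v)$ is characterized by
\[
(v-P_C(v))^\top(x-P_C(v))\le 0 \quad\text{for all } x\in C.
\]
Apply this with $C=C_{\geq}$ and $x=\tilde d_{=}$, which is admissible because $\tilde d_{=}\in C_{=}\subseteq C_{\geq}$. This gives $(v-\tilde d_{\geq})^\top(\tilde d_{=}-\tilde d_{\geq})\le 0$. Now expand
\[
\lVert \tilde d_{=}-v\rVert^2=\lVert(\tilde d_{=}-\tilde d_{\geq})+(\tilde d_{\geq}-v)\rVert^2 .
\]
This equals $\lVert\tilde d_{=}-\tilde d_{\geq}\rVert^2+\lVert\tilde d_{\geq}-v\rVert^2+2(\tilde d_{=}-\tilde d_{\geq})^\top(\tilde d_{\geq}-v)$. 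The cross term is nonnegative by the variational inequality, and dropping it yields Equation~\ref{eq:theory_projection_comparison}.

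\textbf{Strictness.} If $\tilde d_{=}\neq\tilde d_{\geq}$, then $\lVert\tilde d_{=}-\tilde d_{\geq}\rVert^2>0$, and the inequality gives $\lVert\tilde d_{\geq}-v\rVert<\lVert\tilde d_{=}-v\rVert$.

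\textbf{Remarks.} The argument is routine. The one step that requires care is justifying the variational inequality. I would either cite it as standard, or derive it from the first-order condition: $\tfrac12\lVert (1-s)\tilde d_{\geq}+s x - v\rVert^2$ is minimized over $s\in[0,1]$ at $s=0$, so its derivative at $s=0$ is nonnegative. The positivity of $b$ is not needed for this proposition; it matters only for the interpretation as progress constraints.
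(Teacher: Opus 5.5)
Your proof is correct and follows the paper's argument step for step: the inclusion $C_{=}\subseteq C_{\geq}$, the first-order optimality (variational) inequality for the projection onto $C_{\geq}$ tested at $\tilde d_{=}$ and derived from the one-sided derivative along the segment, and the expansion of $\lVert\tilde d_{=}-v\rVert^2$ with the nonnegative cross term dropped. Your two extra remarks are accurate and are only implicit in the paper: closedness and convexity guarantee that the projections exist and are unique, and positivity of $b$ is not needed for this result.
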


\begin{proof}
Any vector satisfying $U\tilde d=b$ also satisfies $U\tilde d\geq b$, so $C_{=}\subseteq C_{\geq}$ and $\tilde d_{=}\in C_{\geq}$. Convexity ensures

$$
\tilde d_{\geq}+s(\tilde d_{=}-\tilde d_{\geq})\in C_{\geq}, \qquad 0\leq s\leq1.
$$

Since $\tilde d_{\geq}$ minimizes the squared distance to $v$ over $C_{\geq}$, the one-sided derivative at $s=0$ of $\lVert\tilde d_{\geq}+s(\tilde d_{=}-\tilde d_{\geq})-v\rVert^2$ is nonnegative. Therefore,

$$
(\tilde d_{\geq}-v)^\top(\tilde d_{=}-\tilde d_{\geq})\geq0.
$$

Expanding the squared distance from $v$ to $\tilde d_{=}$ gives
\begin{align*}
\lVert\tilde d_{=}-v\rVert^2
=\lVert\tilde d_{\geq}-v\rVert^2+\lVert\tilde d_{=}-\tilde d_{\geq}\rVert^2+2(\tilde d_{\geq}-v)^\top(\tilde d_{=}-\tilde d_{\geq})\
\geq\lVert\tilde d_{\geq}-v\rVert^2+\lVert\tilde d_{=}-\tilde d_{\geq}\rVert^2.
\end{align*}
This proves Equation~\ref{eq:theory_projection_comparison}. If $\tilde d_{=}\neq\tilde d_{\geq}$, then $\lVert\tilde d_{=}-\tilde d_{\geq}\rVert^2>0$, and hence $\lVert\tilde d_{\geq}-v\rVert<\lVert\tilde d_{=}-v\rVert$.
\end{proof}

Feasibility of the inequalities does not conversely guarantee feasibility of the equalities. For example, take rows $(1,0)$, $(0,1)$, and $(1,1)/\sqrt{2}$ and set $b=\tau\mathbf{1}$ with $\tau>0$. The vector $(\tau,\tau)$ satisfies all inequalities. The first two equalities force this vector, but the third then requires $\sqrt{2}\tau=\tau$, which is impossible. Likewise, the inequality projection leaves $v$ unchanged whenever $Uv\geq b$, including when some targets are exceeded; the equality projection does so only when $Uv=b$. This result concerns only the constrained directions before the rescaling in Equation~\ref{eq:fedpcd_rescaling}.

\paragraph{Published convergence bounds.}
For one local step, CRAFT's Theorem A.4 bounds the average squared global gradient by $A/(\eta_gT)+B\eta_g$, with constants independent of $T$; balancing the terms gives $\mathcal O(1/\sqrt{T})$ \citep{wangCRAFTConflictResolvedAggregation2026}. FedPCD's theorem proves $\mathcal O(1/T)$ for the same measure under a constant admissible server step. CRAFT assumes bounded local gradients and updates, while FedPCD uses a uniform angular margin and relative drift. These are comparisons of upper bounds under their respective conditions, not a lower bound on CRAFT's attainable rate. In FedPCD's proof, $\lVert d_t\rVert=\lVert\bar g_t\rVert\leq K(1+r)S_t$ allows the smoothness remainder to be absorbed into descent. Therefore, the rate does not follow from inequality constraints alone.

\section{Expanded Results}
\label{app:expanded-results}

\subsection{Additional $\alpha$ values}\
\label{app:all-alpha}

Figs.~\ref{fig:client-distribution-alpha0.1}--\ref{fig:communication-rounds-convergence-alpha1} show the client distributions, comparison plots, and convergence results for $\alpha=0.1$ and $\alpha=1$. All figures use $\tau=0.02$ and $H=10$, consistent with Figs.~\ref{fig:fairness-frontier-alpha0.01}--\ref{fig:communication-rounds-convergence-alpha0.01} for $\alpha=0.01$.

\begin{figure}[h]
    \centering
    \includegraphics[width=\linewidth]{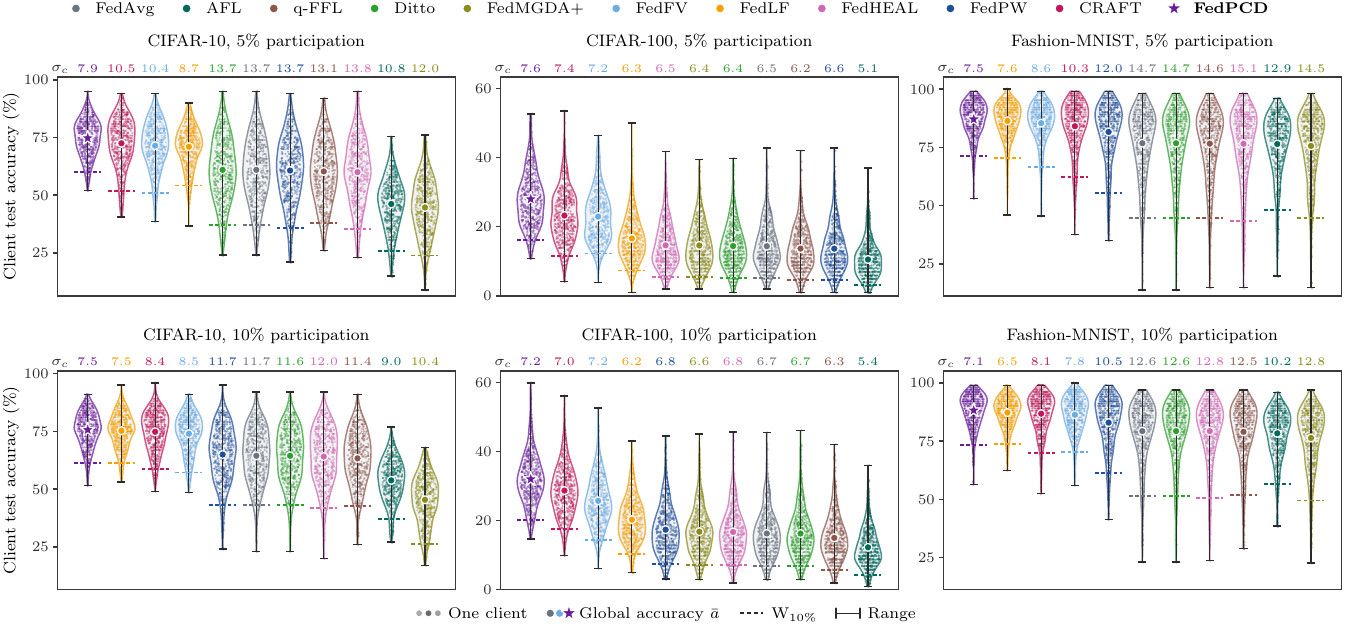}
    \caption{Client test-accuracy distributions at $\alpha=0.1$ across datasets and participation rates. Methods are ordered by global accuracy with each violin representing the kernel density of client accuracies pooled over the 5 seeds.}
    \label{fig:client-distribution-alpha0.1}
\end{figure}

\begin{figure}[h]
    \centering
    \includegraphics[width=\linewidth]{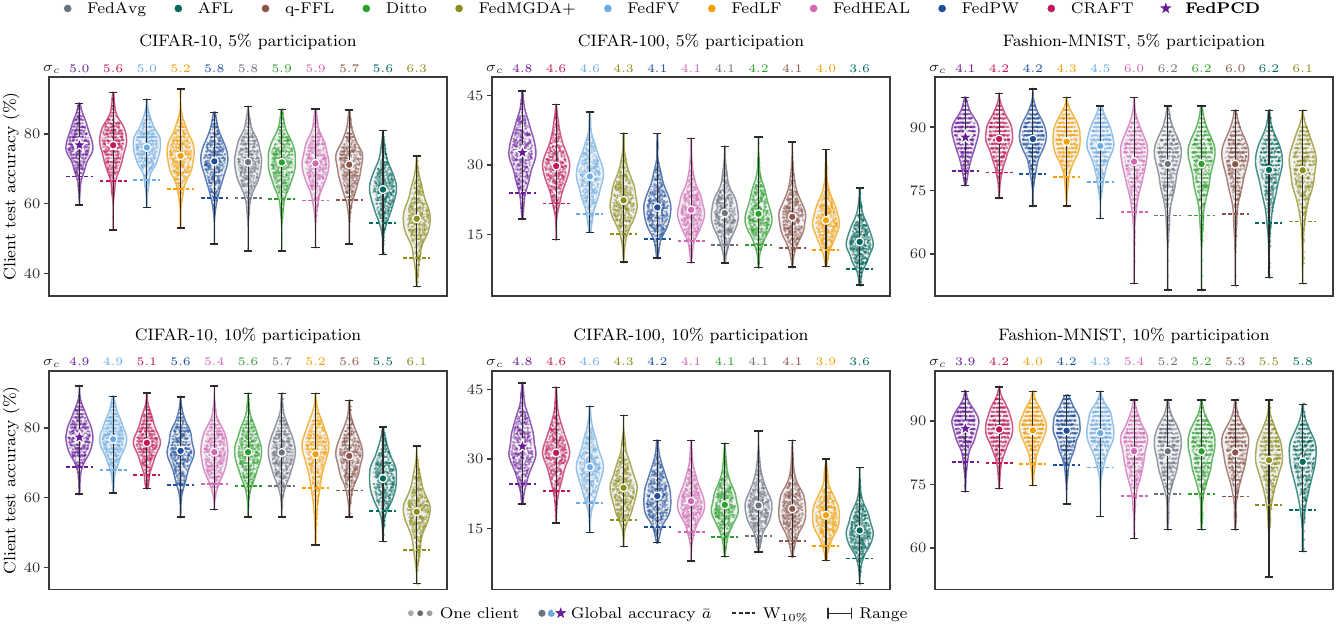}
    \caption{Client test-accuracy distributions at $\alpha=1$ across datasets and participation rates. Methods are ordered by global accuracy with each violin representing the kernel density of client accuracies pooled over the 5 seeds.}
    \label{fig:client-distribution-alpha1}
\end{figure}

\begin{figure}[h]
    \centering
    \includegraphics[width=0.85\linewidth]{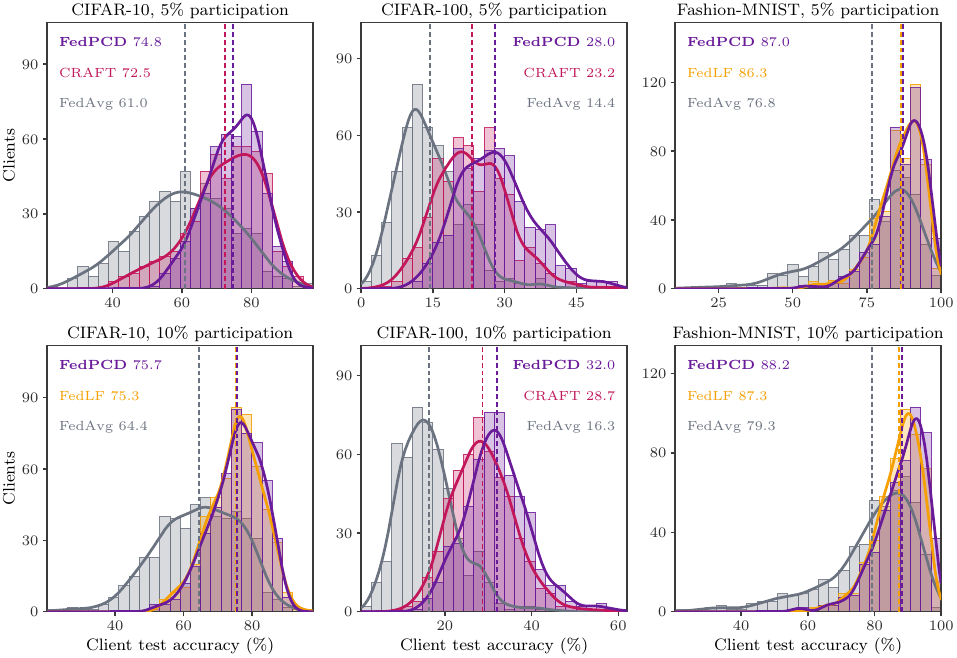}
    \caption{Client test accuracy distributions for FedPCD, FedAvg, and the strongest non-FedPCD baseline at $\alpha = 0.1$. The distribution means are shown as dashed lines and their values are reported per panel.}
    \label{fig:client-histogram-alpha0.1}
\end{figure}

\begin{figure}[h]
    \centering
    \includegraphics[width=0.85\linewidth]{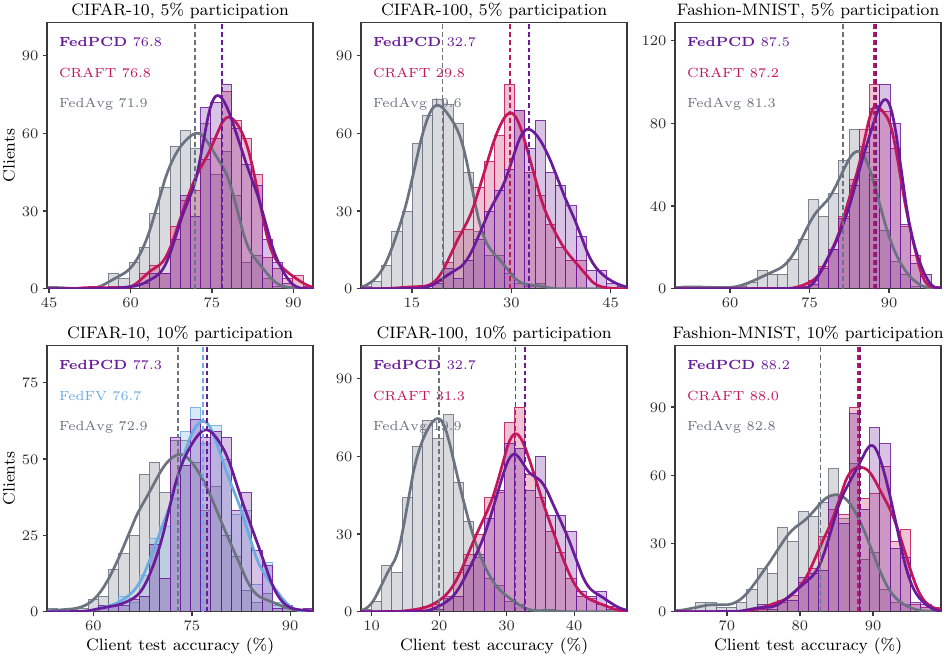}
    \caption{Client test accuracy distributions for FedPCD, FedAvg, and the strongest non-FedPCD baseline at $\alpha = 1$. The distribution means are shown as dashed lines and their values are reported per panel.}
    \label{fig:client-histogram-alpha1}
\end{figure}

\begin{figure}[h]
    \centering
    \includegraphics[width=0.75\linewidth]{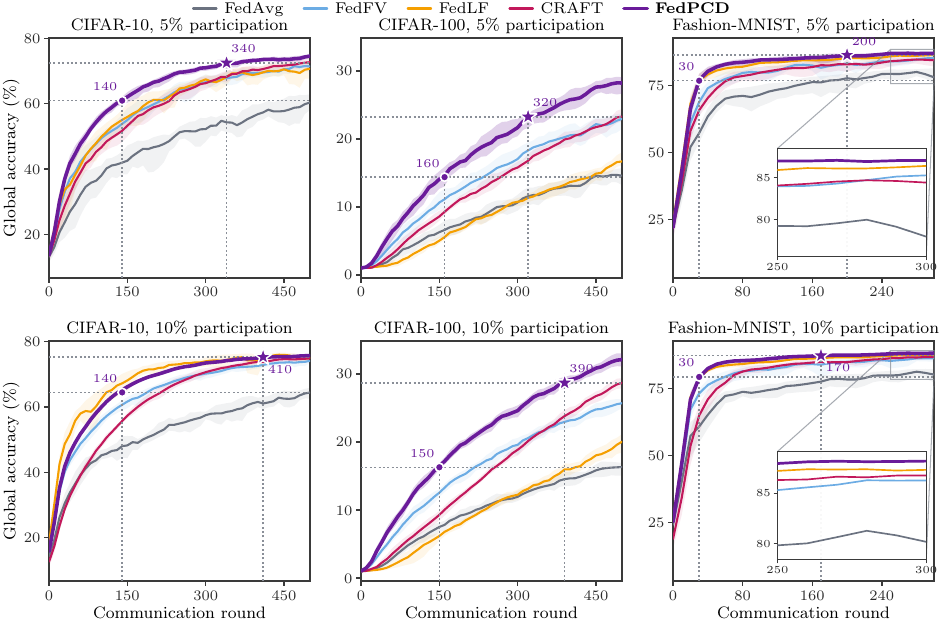}
    \caption{Global accuracy against communication rounds at $\alpha=0.1$. Dots mark when FedPCD reaches FedAvg's final accuracy, while stars mark when it reaches the strongest baseline's final accuracy. When the latter is not reached, a hollow ring marks when that baseline reaches FedAvg's final accuracy.}
    \label{fig:communication-rounds-convergence-alpha0.1}
\end{figure}

\begin{figure}[h]
    \centering
    \includegraphics[width=0.75\linewidth]{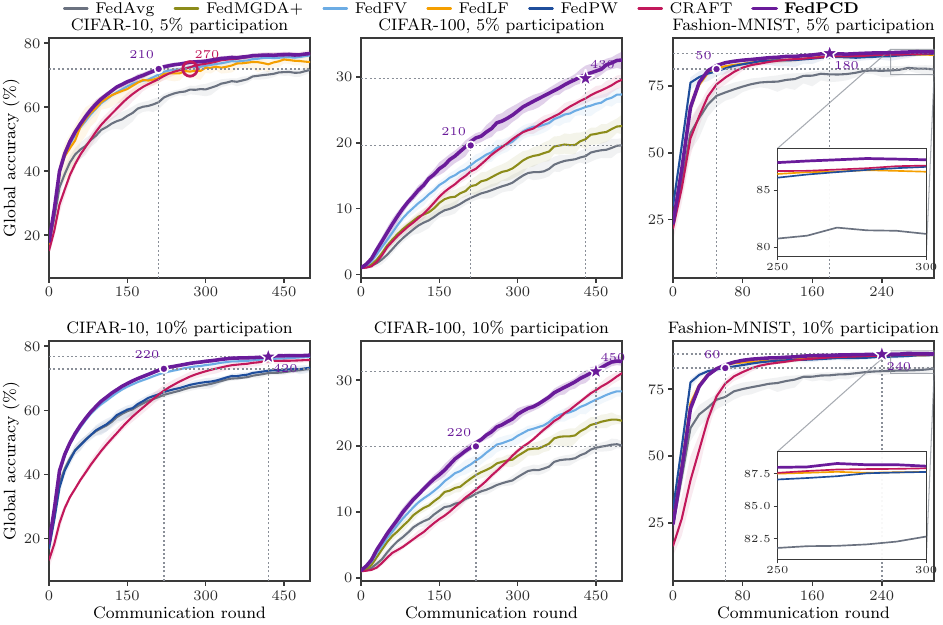}
    \caption{Global accuracy against communication rounds at $\alpha=1$. Dots mark when FedPCD reaches FedAvg's final accuracy, while stars mark when it reaches the strongest baseline's final accuracy. When the latter is not reached, a hollow ring marks when that baseline reaches FedAvg's final accuracy.}
    \label{fig:communication-rounds-convergence-alpha1}
\end{figure}

\clearpage

\subsection{Fairness Study}
\label{app:fairness}

Figs.~\ref{fig:fairness-frontier-alpha0.1} and~\ref{fig:fairness-frontier-alpha1} show the accuracy against the standard deviation for $\alpha = 0.1$ and $1$. We also provide Table~\ref{tab:jain} as a complementary fairness measure. For a vector of client accuracies $x$ over $N$ clients, $J(x)=(\sum_i x_i)^2 / (N\sum_i x_i^2)$. It equals 1 when every client reaches the same accuracy and is unchanged when every client is scaled by the same factor, so unlike $\sigma_c$ it does not grow with the accuracy it is measured around.

\begin{figure}[h]
    \centering
    \includegraphics[width=0.65\linewidth]{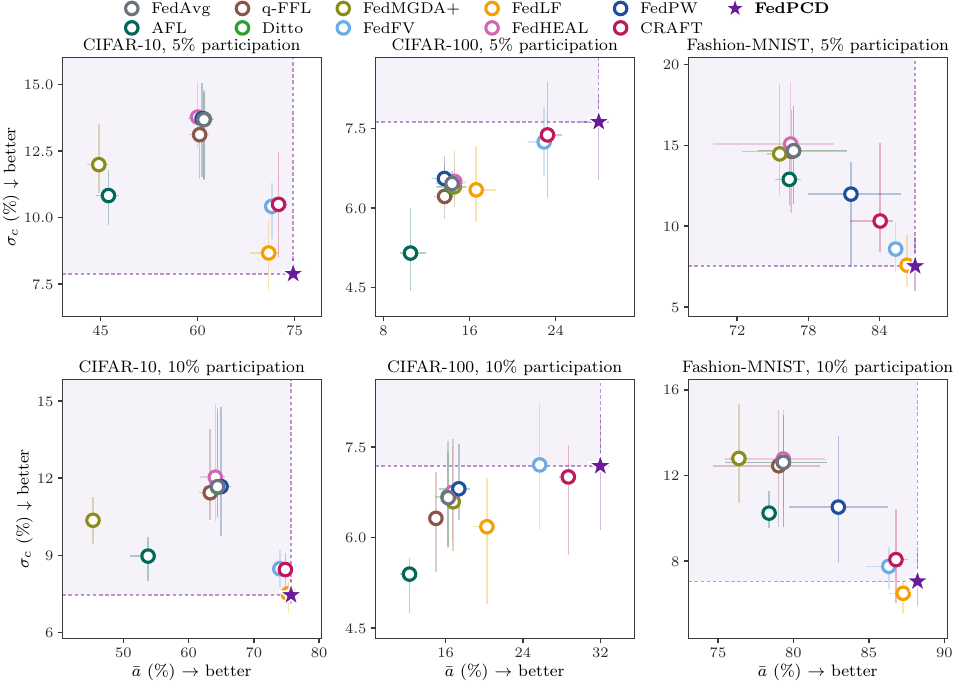}
    \caption{Global accuracy against client-accuracy spread at $\alpha=0.1$ for both participation rates. Higher accuracy and lower $\sigma_c$ are preferred. Crosses indicate variation across the 5 seeds.}
    \label{fig:fairness-frontier-alpha0.1}
\end{figure}

\begin{figure}[h]
    \centering
    \includegraphics[width=0.65\linewidth]{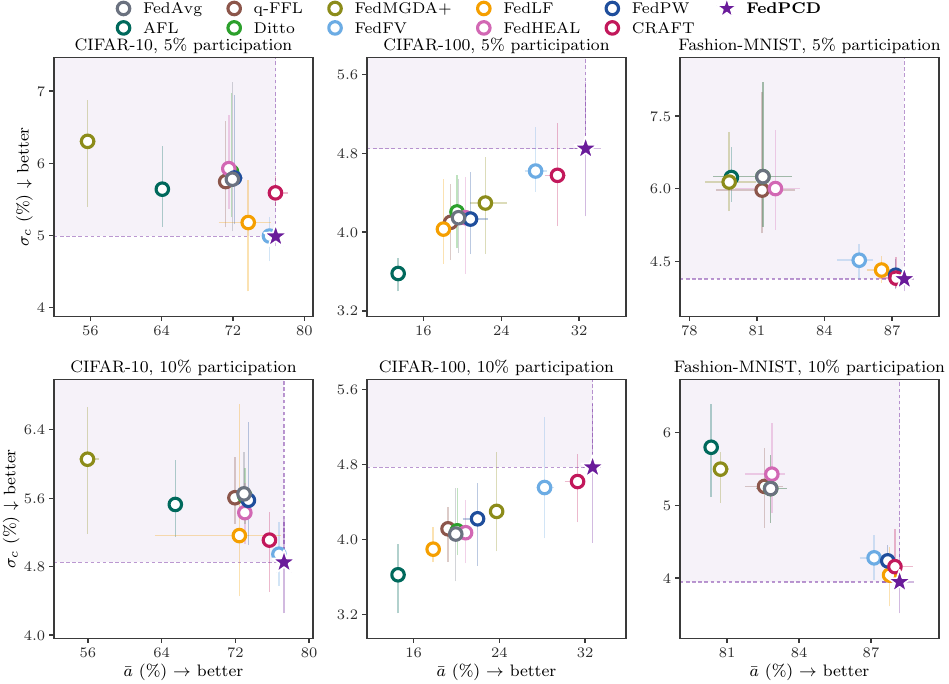}
    \caption{Global accuracy against client-accuracy spread at $\alpha=1$ for both participation rates. Higher accuracy and lower $\sigma_c$ are preferred. Crosses indicate variation across the 5 seeds.}
    \label{fig:fairness-frontier-alpha1}
\end{figure}

\begin{table}[!t]
\centering
\caption{Jain's fairness index over the per-client accuracies, under the same settings as Table~\ref{tab:main_results}. Higher is better. Within each dataset block, bold marks the best overall value in a column and underline marks the best value from the competing method group, such that the strongest FedPCD configuration and strongest baseline are both highlighted.}
\label{tab:jain}

\setlength{\tabcolsep}{5pt}
\renewcommand{\arraystretch}{1.0}
\footnotesize
\begin{tabular}{lcccccc}
\toprule
Method & \multicolumn{3}{c}{5\% participation} & \multicolumn{3}{c}{10\% participation} \\
\cmidrule(lr){2-4} \cmidrule(lr){5-7}
 & $\alpha=0.01$ & $\alpha=0.1$ & $\alpha=1$ & $\alpha=0.01$ & $\alpha=0.1$ & $\alpha=1$ \\
\midrule
\multicolumn{7}{@{}l}{\textbf{CIFAR-10}} \\
\addlinespace[1pt]
FedAvg & 0.8574 & 0.9516 & 0.9935 & 0.9256 & 0.9674 & 0.9940 \\
AFL & 0.6285 & 0.9476 & 0.9923 & 0.9346 & 0.9727 & 0.9929 \\
q-FFL & 0.8958 & 0.9545 & 0.9935 & 0.9320 & 0.9678 & 0.9940 \\
Ditto & 0.8593 & 0.9514 & 0.9933 & 0.9258 & 0.9676 & 0.9941 \\
FedMGDA+ & 0.8860 & 0.9322 & 0.9872 & 0.8974 & 0.9502 & 0.9883 \\
FedFV & 0.9393 & 0.9791 & \underline{0.9957} & 0.9754 & 0.9870 & \underline{0.9958} \\
FedLF & \underline{0.9527} & \underline{0.9850} & 0.9950 & \underline{0.9858} & \underline{0.9901} & 0.9945 \\
FedHEAL & 0.8458 & 0.9494 & 0.9931 & 0.9226 & 0.9651 & 0.9945 \\
FedPW & 0.8469 & 0.9508 & 0.9935 & 0.9069 & 0.9679 & 0.9942 \\
CRAFT & 0.9421 & 0.9789 & 0.9947 & 0.9739 & 0.9874 & 0.9954 \\
\cmidrule(lr){1-7}
FedPCD$_{\tau=0.005}$ & \textbf{0.9803} & 0.9877 & 0.9959 & 0.9847 & 0.9893 & 0.9960 \\
FedPCD$_{\tau=0.01}$ & 0.9797 & 0.9882 & 0.9957 & \textbf{0.9860} & 0.9895 & 0.9958 \\
FedPCD$_{\tau=0.02}$ & 0.9795 & \textbf{0.9890} & 0.9958 & 0.9850 & \textbf{0.9904} & 0.9961 \\
FedPCD$_{\tau=0.05}$ & 0.9680 & 0.9875 & 0.9960 & 0.9805 & 0.9899 & 0.9961 \\
FedPCD$_{\tau=0.075}$ & 0.9588 & 0.9872 & \textbf{0.9961} & 0.9755 & 0.9894 & \textbf{0.9964} \\
FedPCD$_{\tau=0.1}$ & 0.9610 & 0.9855 & \textbf{0.9961} & 0.9742 & 0.9887 & 0.9962 \\
\midrule
\multicolumn{7}{@{}l}{\textbf{CIFAR-100}} \\
\addlinespace[1pt]
FedAvg & 0.4750 & 0.8302 & 0.9570 & 0.5632 & 0.8548 & 0.9600 \\
AFL & 0.4131 & 0.8058 & 0.9333 & 0.5390 & 0.8372 & 0.9415 \\
q-FFL & 0.4722 & 0.8280 & 0.9545 & 0.5387 & 0.8485 & 0.9561 \\
Ditto & 0.4696 & 0.8339 & 0.9552 & 0.5700 & 0.8538 & 0.9599 \\
FedMGDA+ & 0.4411 & 0.8371 & 0.9639 & 0.5555 & 0.8644 & 0.9679 \\
FedFV & 0.6250 & \underline{0.9091} & 0.9724 & 0.7316 & 0.9268 & 0.9744 \\
FedLF & 0.4119 & 0.8717 & 0.9518 & 0.6285 & 0.9147 & 0.9544 \\
FedHEAL & 0.4408 & 0.8336 & 0.9595 & 0.5679 & 0.8578 & 0.9631 \\
FedPW & 0.4451 & 0.8106 & 0.9620 & 0.5687 & 0.8645 & 0.9639 \\
CRAFT & \underline{0.6495} & 0.9072 & \underline{0.9768} & \underline{0.7939} & \underline{0.9432} & \underline{0.9787} \\
\cmidrule(lr){1-7}
FedPCD$_{\tau=0.005}$ & 0.6913 & 0.9269 & 0.9771 & 0.7946 & 0.9454 & 0.9753 \\
FedPCD$_{\tau=0.01}$ & 0.6968 & 0.9267 & 0.9762 & 0.8005 & 0.9477 & 0.9768 \\
FedPCD$_{\tau=0.02}$ & \textbf{0.7037} & 0.9309 & 0.9781 & \textbf{0.8067} & 0.9517 & 0.9790 \\
FedPCD$_{\tau=0.05}$ & 0.6759 & \textbf{0.9343} & \textbf{0.9815} & 0.7997 & \textbf{0.9518} & 0.9821 \\
FedPCD$_{\tau=0.075}$ & 0.6622 & 0.9266 & 0.9800 & 0.7903 & 0.9499 & \textbf{0.9825} \\
FedPCD$_{\tau=0.1}$ & 0.6297 & 0.9170 & 0.9793 & 0.7747 & 0.9430 & 0.9818 \\
\midrule
\multicolumn{7}{@{}l}{\textbf{Fashion-MNIST}} \\
\addlinespace[1pt]
FedAvg & 0.9390 & 0.9639 & 0.9939 & 0.9388 & 0.9741 & 0.9960 \\
AFL & 0.9498 & 0.9721 & 0.9939 & 0.9689 & 0.9831 & 0.9948 \\
q-FFL & 0.9407 & 0.9635 & 0.9944 & 0.9375 & 0.9747 & 0.9959 \\
Ditto & 0.9390 & 0.9639 & 0.9939 & 0.9388 & 0.9741 & 0.9960 \\
FedMGDA+ & 0.9077 & 0.9637 & 0.9940 & 0.9239 & 0.9722 & 0.9954 \\
FedFV & 0.9727 & 0.9899 & 0.9972 & 0.9757 & 0.9919 & 0.9976 \\
FedLF & \underline{0.9840} & \underline{0.9921} & 0.9975 & \textbf{0.9885} & \textbf{0.9944} & \underline{0.9979} \\
FedHEAL & 0.9334 & 0.9605 & 0.9946 & 0.9372 & 0.9739 & 0.9957 \\
FedPW & 0.9353 & 0.9777 & \underline{0.9977} & 0.9625 & 0.9832 & 0.9977 \\
CRAFT & 0.9523 & 0.9841 & \underline{0.9977} & 0.9718 & 0.9912 & 0.9978 \\
\cmidrule(lr){1-7}
FedPCD$_{\tau=0.005}$ & \textbf{0.9860} & 0.9927 & 0.9979 & 0.9841 & \underline{0.9937} & 0.9980 \\
FedPCD$_{\tau=0.01}$ & 0.9842 & \textbf{0.9928} & \textbf{0.9980} & \underline{0.9856} & 0.9935 & \textbf{0.9981} \\
FedPCD$_{\tau=0.02}$ & 0.9855 & 0.9924 & 0.9978 & 0.9833 & 0.9936 & 0.9980 \\
FedPCD$_{\tau=0.05}$ & 0.9805 & 0.9922 & 0.9977 & 0.9844 & 0.9928 & 0.9978 \\
FedPCD$_{\tau=0.075}$ & 0.9815 & 0.9913 & 0.9974 & 0.9852 & 0.9929 & 0.9977 \\
FedPCD$_{\tau=0.1}$ & 0.9792 & 0.9909 & 0.9973 & 0.9828 & 0.9929 & 0.9977 \\
\bottomrule
\end{tabular}
\end{table}

\clearpage
\section{Ablations}
\label{app:ablations}

\subsection{Component Ablation}
To assess the contribution of the individual components of FedPCD, we experiment with removing the history mechanism, the secondary progress constraints by setting $\tau=0$, the normalization step, and the primary objective independently. All of these experiments are conducted with 10\% participation rate with the CIFAR-10 dataset averaged over 5 seeds. For nonzero-$\tau$ variants, we use $\tau=(0.01,0.02,0.075)$ for $\alpha=(0.01,0.1,1)$, respectively, which are the best $\tau$ values in their respective $\alpha$ from Table~\ref{tab:main_results}. For the history-free ablation, we report the best tested values,
$\tau=(0.1,0.1,0.1)$ for $\alpha=(0.01,0.1,1)$, respectively. Because removing history reduces the number of secondary constraints, the QP can accommodate larger progress thresholds, which explains why the preferred $\tau$ values are higher than those used with history. The results of these experiments are reported in Table~\ref{tab:component-ablation}. The history mechanism has the largest effect, particularly under stronger heterogeneity. Removing history reduces FedPCD from $74.84\%$ to $59.26\%$ global accuracy at $\alpha=0.01$, with a corresponding drop in worst-$10\%$ accuracy. This supports the motivation for retaining recently observed client directions when only a small fraction of clients participate in each round. Setting $\tau=0$ also consistently reduces performance relative to the complete method, showing that enforcing positive client progress provides an additional benefit beyond the history mechanism itself. Removing normalization leads to smaller but consistent degradations in global and tail-client accuracy, while removing the primary direction causes a larger loss, confirming the importance of anchoring the constrained solution to the current federated aggregate. Overall, the complete formulation provides the strongest balance across the three heterogeneity levels.

\begin{table}[h]
\centering
\caption{Component ablation on CIFAR-10 at 10\% participation.}
\label{tab:component-ablation}
\setlength{\tabcolsep}{2.8pt}
\renewcommand{\arraystretch}{1.1}

\begin{tabular}{lccccccccc}
\toprule
Method & \multicolumn{3}{c}{$\alpha=0.01$} & \multicolumn{3}{c}{$\alpha=0.1$} & \multicolumn{3}{c}{$\alpha=1$} \\
\cmidrule(lr){2-4}\cmidrule(lr){5-7}\cmidrule(lr){8-10}
& $\bar a\uparrow$ & $\bar\sigma_c\downarrow$ & $W_{10\%}\uparrow$
& $\bar a\uparrow$ & $\bar\sigma_c\downarrow$ & $W_{10\%}\uparrow$
& $\bar a\uparrow$ & $\bar\sigma_c\downarrow$ & $W_{10\%}\uparrow$ \\
\midrule
FedAvg & 57.47 & 15.93 & 29.93 & 64.43 & 11.67 & 43.27 & 72.94 & 5.65 & 63.28 \\
\midrule
FedPCD & \textbf{74.84} & \textbf{8.89} & \textbf{58.03} & \textbf{75.69} & \textbf{7.45} & \textbf{61.28} & \textbf{77.51} & 4.68 & \textbf{69.24} \\
FedPCD w/o history & 59.26 & 16.24 & 29.65 & 65.50 & 11.64 & 44.13 & 73.00 & 5.69 & 63.16 \\
FedPCD$_{\tau=0}$ & 74.31 & 9.15 & 57.55 & 74.88 & 7.90 & 59.91 & 76.96 & 5.12 & 67.85 \\
FedPCD w/o normalization & 72.99 & 9.40 & 54.44 & 75.08 & 7.46 & 60.85 & 76.96 & \textbf{4.66} & 68.81 \\
FedPCD w/o primary & 70.75 & 12.14 & 46.86 & 73.40 & 8.12 & 57.60 & 76.33 & 4.91 & 67.44 \\
\bottomrule
\end{tabular}
\end{table}

\subsection{History window sweep}

To examine how much historical client information is needed under partial participation, we vary the history horizon $H$ from 0 to 10 and report global accuracy, worst-$10\%$ client accuracy, and the number of active secondary constraints in Fig.~\ref{fig:history-sweep}. Increasing $H$ consistently improves both global and tail-client performance, with the largest gains occurring over the first few additional rounds of history. On CIFAR-10, for instance, the $5\%$ participation setting increases from below $50\%$ to over $70\%$ global accuracy and from approximately $20\%$ to above $50\%$ worst-$10\%$ accuracy as the history window is expanded. CIFAR-100 exhibits a similarly strong effect, with both metrics continuing to improve across tested range. In contrast, Fashion-MNIST reaches most of its global accuracy improvement by approximately $H=3$--$5$, after which the benefit of additional history becomes comparatively small.

The bottom row of Fig.~\ref{fig:history-sweep} shows that these gains coincide with a steady increase in the number of active secondary constraints, confirming that a larger history window allows a broader set of recently observed clients to influence the constrained update. The effect is stronger at $10\%$ participation, where more secondary objectives are taken into account over the same history horizon. Importantly, performance eventually exhibits diminishing returns even though the number of constraints continues to increase. This is particularly visible on CIFAR-10 and Fashion-MNIST, suggesting that once sufficient client coverage has been obtained, retaining increasingly old directions provides limited additional benefit. Overall, the sweep supports the use of history as a mechanism for compensating for partial participation, while also showing that excessively long history windows are not necessary to obtain most of the improvement.

\begin{figure*}[!h]
    \centering
    \includegraphics[width=1\linewidth]{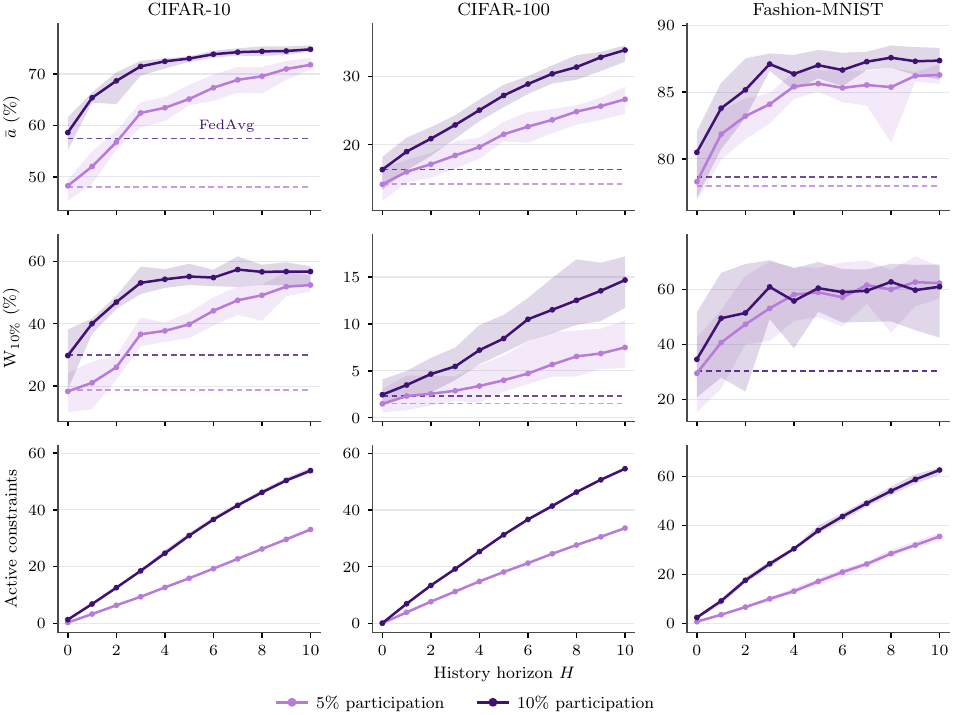}
    \caption{Impact of increasing history window size $H$ on the global accuracy, the worst-10\% across clients, and the number of active secondary constraints. This is reported across all three datasets, with $\tau = 0.02$ and $\alpha = 0.01$.}
    \label{fig:history-sweep}
\end{figure*}

\subsection{Normalization Study}

The main FedPCD formulation uses instantaneous $\ell_2$ normalization of the current client directions, whereas the original PCD formulation also considers normalization based on exponential moving averages \citet{varamNotAllObjectives2026}. Therefore, we compare instantaneous normalization against EMA normalization with $\beta\in\{0.5,0.8,0.9\}$ to determine whether maintaining normalization statistics across rounds is beneficial in the federated setting. As shown in Table~\ref{tab:normalization-ablation}, instantaneous $\ell_2$ normalization achieves the best global accuracy at all three heterogeneity levels and gives the strongest results for most of the remaining metrics. The main exception occurs at $\alpha=0.1$, where EMA with $\beta=0.8$ slightly improves the client standard deviation from $7.45\%$ to $7.44\%$ and the worst-$10\%$ accuracy from $61.28\%$ to $61.83\%$. These differences are small and do not persist across the other settings. The basic $\ell_2$ normalization provides better performance without requiring client-specific normalization statistics to be maintained across intermittent participation, supporting its use in the main FedPCD formulation. This is to be expected since the normalization statistics for the secondary objectives should not be expected to be updated frequently due to partial participation.

\begin{table}[!h]
\centering
\caption{Normalization ablation on CIFAR-10 at 10\% participation and round 500. Results are averaged over five seeds with $H=10$. We use $\tau=(0.01,0.02,0.075)$ for $\alpha=(0.01,0.1,1)$, respectively.}
\label{tab:normalization-ablation}
\setlength{\tabcolsep}{2.8pt}
\renewcommand{\arraystretch}{1.1}
\begin{tabular}{lccccccccc}
\toprule
Normalization & \multicolumn{3}{c}{$\alpha=0.01$} & \multicolumn{3}{c}{$\alpha=0.1$} & \multicolumn{3}{c}{$\alpha=1$} \\
\cmidrule(lr){2-4}\cmidrule(lr){5-7}\cmidrule(lr){8-10}
& $\bar a\uparrow$ & $\bar\sigma_c\downarrow$ & $W_{10\%}\uparrow$
& $\bar a\uparrow$ & $\bar\sigma_c\downarrow$ & $W_{10\%}\uparrow$
& $\bar a\uparrow$ & $\bar\sigma_c\downarrow$ & $W_{10\%}\uparrow$ \\
\midrule
Instantaneous $\ell_2$ & \textbf{74.84} & \textbf{8.89} & \textbf{58.03} & \textbf{75.69} & 7.45 & 61.28 & \textbf{77.51} & \textbf{4.68} & \textbf{69.24} \\
Scalar EMA ($\beta=0.5$) & 74.77 & 8.98 & 57.61 & 75.54 & 7.66 & 60.68 & 77.46 & 4.83 & 68.77 \\
Scalar EMA ($\beta=0.8$) & 74.69 & 9.17 & 56.77 & \textbf{75.69} & \textbf{7.44} & \textbf{61.83} & 77.35 & 4.82 & 68.73 \\
Scalar EMA ($\beta=0.9$) & 74.81 & 9.07 & 57.30 & 75.51 & 7.68 & 60.66 & 77.29 & 4.78 & 68.90 \\
\bottomrule
\end{tabular}
\end{table}

\end{document}

%% file: math_commands.tex
\usepackage{amsmath,amsfonts,bm}

\def\eqref#1{equation~\ref{#1}}

\def\1{\bm{1}}

\DeclareMathAlphabet{\mathsfit}{\encodingdefault}{\sfdefault}{m}{sl}
\SetMathAlphabet{\mathsfit}{bold}{\encodingdefault}{\sfdefault}{bx}{n}



%% file: figures/fig_taxonomy.tex
\definecolor{taxA}{RGB}{42,127,98}%
\definecolor{taxB}{RGB}{47,109,181}%
\definecolor{taxC}{RGB}{176,124,26}%
\definecolor{taxD}{RGB}{111,78,156}%
\definecolor{taxE}{RGB}{106,27,154}%
\definecolor{taxInk}{RGB}{70,70,70}%

\ifdefined\taxRootW\else
  \newlength{\taxRootW}\newlength{\taxGapX}\newlength{\taxGapY}\newlength{\taxPad}\newlength{\taxBul}
  \newlength{\taxLoneW}\newlength{\taxLtwoW}\newlength{\taxLeafW}\newlength{\taxBoxH}
  \newlength{\taxXone}\newlength{\taxXtwo}\newlength{\taxXleaf}
\fi
\setlength{\taxRootW}{0.50cm}%
\setlength{\taxGapX}{0.24cm}%
\setlength{\taxGapY}{0.17cm}%
\setlength{\taxPad}{3.5pt}%
\setlength{\taxBul}{6pt}%
\setlength{\taxLoneW}{4.85cm}%
\setlength{\taxLtwoW}{2.34cm}%
\setlength{\taxBoxH}{0.90cm}%
\setlength{\taxXone}{\dimexpr\taxRootW+\taxGapX\relax}%
\setlength{\taxXtwo}{\dimexpr\taxXone+\taxLoneW+\taxGapX\relax}%
\setlength{\taxXleaf}{\dimexpr\taxXtwo+\taxLtwoW+\taxGapX\relax}%
\setlength{\taxLeafW}{\dimexpr\linewidth-\taxXleaf-1pt\relax}%

\newcommand{\taxbox}[3]{\parbox[c][\dimexpr#1-2\taxPad\relax][c]{\dimexpr#2-2\taxPad\relax}{\centering\footnotesize#3\par}}
\newcommand{\taxfam}[3]{\taxbox{\taxBoxH}{\taxLoneW}{{\bfseries\color{#1}#2}\\{\scriptsize\itshape\color{taxInk}#3}}}
\newcommand{\taxsub}[3]{\taxbox{\taxBoxH}{\taxLtwoW}{{\bfseries\color{#1}#2}\\{\scriptsize\itshape\color{taxInk}#3}}}
\newcommand{\taxleaf}[1]{\parbox{\dimexpr\taxLeafW-2\taxPad\relax}{\scriptsize\setlength{\parindent}{0pt}\setlength{\parskip}{0pt}\raggedright#1\par}}
\newcommand{\taxm}[3]{\mbox{{\color{#1}\textbullet}~\textbf{#2} \citep{#3}}}
\newcommand{\taxsep}{\hspace{0.8em}}
\newcommand{\taxitem}[2]{\par\noindent\hangindent=\taxBul\hangafter=1\makebox[\taxBul][l]{\textcolor{#1}{\textbullet}}#2\par}

\begin{tikzpicture}[
  every node/.style={inner sep=\taxPad, outer sep=0pt},
  box/.style={draw=#1, line width=0.5pt, rounded corners=2pt, fill=#1!5},
  fam/.style={box=#1, fill=#1!12},
  leaf/.style={box=#1, fill=white, minimum height=\taxBoxH},
  ours/.style={line width=1.1pt},
  edge/.style={draw=black!60, line width=0.6pt},
  arr/.style={edge, -{Stealth[length=3.6pt, width=3.2pt]}},
]

\node[leaf=taxA, anchor=north west] (l1) at (\taxXleaf,0) {\taxleaf{%
  \taxm{taxA}{AFL}{mohriAgnosticFederatedLearning2019}\taxsep
  \taxm{taxA}{q-FFL}{liFairResourceAllocation2019}\taxsep
  \taxm{taxA}{TERM}{liTiltedEmpiricalRisk2020}\taxsep
  \taxm{taxA}{FedFa}{huangFairnessAccuracyHorizontal2022}}};
\node[leaf=taxB, anchor=north west] (l2) at ($(l1.south west)+(0,-\taxGapY)$) {\taxleaf{%
  \taxm{taxB}{FedHEAL}{chenFairFederatedLearning2024}\taxsep
  \taxm{taxB}{FedPW}{guanRethinkingFairFederated2025}}};
\node[leaf=taxC, anchor=north west] (l3) at ($(l2.south west)+(0,-\taxGapY)$) {\taxleaf{%
  \taxm{taxC}{FedFV$^\dagger$}{wangFederatedLearningFair2021}\taxsep
  \taxm{taxC}{CRAFT}{wangCRAFTConflictResolvedAggregation2026}}};
\node[leaf=taxD, anchor=north west] (l4) at ($(l3.south west)+(0,-\taxGapY)$) {\taxleaf{%
  \taxm{taxD}{FedMGDA+}{huFederatedLearningMeets2022}\taxsep
  \taxm{taxD}{AdaFed}{hamidiAdaFedFairFederated2023}\taxsep
  \taxm{taxD}{FedMDFG$^\dagger$}{panFedMDFGFederatedLearning2023}\taxsep
  \taxm{taxD}{FedLF$^\dagger$}{panFedLFLayerWiseFair2024}\taxsep
  \taxm{taxD}{DQN-Fed}{hamidiDistributedQuasiNewtonMethod2025}\taxsep
  \taxm{taxD}{FedMC+$^\dagger$}{shenMultiobjectiveFederatedLearning2025}}};
\node[leaf=taxE, ours, fill=taxE!4, anchor=north west] (l5) at ($(l4.south west)+(0,-\taxGapY)$) {\taxleaf{%
  {\bfseries\color{taxE}FedPCD (ours)}, built on PCD \citep{varamNotAllObjectives2026}\par
  \taxitem{taxE}{\textbf{Primary:} the FedAvg-like aggregate $\tilde{\bar g}_t$}
  \taxitem{taxE}{\textbf{Secondary:} client progress constraints $\tilde g_{i,t}^{\top} d \ge \tau\lVert\tilde g_{i,t}\rVert^2$}
  \taxitem{taxE}{\textbf{Minimal correction:} $d_t^\star=\tilde{\bar g}_t$ if all constraints hold}
  \taxitem{taxE}{\textbf{History-aware}$^\dagger$: cached $\tilde h_{i,t}$ under partial participation}
  \taxitem{taxE}{$\mathcal{O}(1/T)$ convergence rate under full participation}}};

\node[fam=taxD, anchor=west] (s4) at (\taxXtwo,0 |- l4.west) {\taxsub{taxD}{Symmetric}{common descent}};
\node[fam=taxE, ours, anchor=west] (s5) at (\taxXtwo,0 |- l5.west) {\taxsub{taxE}{Hierarchical}{(ours)}};

\node[fam=taxA, anchor=west] (f1) at (\taxXone,0 |- l1.west) {\taxfam{taxA}{Loss / Client Reweighting}{Reweight losses across clients}};
\node[fam=taxB, anchor=west] (f2) at (\taxXone,0 |- l2.west) {\taxfam{taxB}{Parameter-wise Aggregation}{Filter or rescale parameter updates}};
\node[fam=taxC, anchor=west] (f3) at (\taxXone,0 |- l3.west) {\taxfam{taxC}{Conflict-Aware Correction}{Reduce client update conflicts}};
\coordinate (m4) at ($(l4.north west)!0.5!(l5.south west)$);
\node[fam=taxD, anchor=west] (f4) at (\taxXone,0 |- m4) {\taxfam{taxD}{Multi-Objective FL}{Clients as separate objectives}};

\path let \p1=(l1.north west), \p2=(l5.south west) in
  node[draw=black!65, line width=0.6pt, rounded corners=2pt, fill=black!5, inner sep=0pt,
       minimum width=\taxRootW, minimum height=\y1-\y2, anchor=north west] (root) at (0,\y1)
       {\rotatebox{90}{\small\bfseries Fairness-Aware Federated Learning}};

\coordinate (spine) at ($(root.east)+(0.08,0)$);
\draw[edge] (root.east) -- (spine);
\draw[edge] (spine |- f1.west) -- (spine |- f4.west);
\foreach \n in {f1,f2,f3,f4} {\draw[arr] (spine |- \n.west) -- (\n.west);}
\foreach \n/\l in {f1/l1,f2/l2,f3/l3} {\draw[arr] (\n.east) -- (\l.west);}
\coordinate (spine2) at ($(f4.east)+(0.08,0)$);
\draw[edge, draw=taxD] (f4.east) -- (spine2);
\draw[edge, draw=taxD] (spine2 |- s4.west) -- (spine2 |- s5.west);
\draw[arr, draw=taxD] (spine2 |- s4.west) -- (s4.west);
\draw[arr, draw=taxE] (spine2 |- s5.west) -- (s5.west);
\draw[arr, draw=taxD] (s4.east) -- (l4.west);
\draw[arr, draw=taxE] (s5.east) -- (l5.west);

\node[anchor=north east, inner sep=0pt, font=\scriptsize, text=taxInk]
  at ($(l5.south east)+(0,-0.14)$) {$^\dagger$ uses historical client information under partial participation};

\end{tikzpicture}

%% file: FedPCD.bib
@inproceedings{chenFairFederatedLearning2024,
  title = {Fair {{Federated Learning Under Domain Skew}} with {{Local Consistency}} and {{Domain Diversity}}},
  booktitle = {2024 {{IEEE}}/{{CVF Conference}} on {{Computer Vision}} and {{Pattern Recognition}} ({{CVPR}})},
  author = {Chen, Yuhang and Huang, Wenke and Ye, Mang},
  year = 2024,
  month = jun,
  pages = {12077--12086},
  issn = {2575-7075},
  doi = {10.1109/CVPR52733.2024.01148},
  urldate = {2026-08-18},

}

@article{desideriMultiplegradientDescentAlgorithm2012,
  title = {Multiple-Gradient Descent Algorithm ({{MGDA}}) for Multiobjective Optimization},
  author = {D{\'e}sid{\'e}ri, Jean-Antoine},
  year = 2012,
  month = mar,
  journal = {Comptes Rendus Mathematique},
  volume = {350},
  number = {5},
  pages = {313--318},
  issn = {1631-073X},
  doi = {10.1016/j.crma.2012.03.014},
  urldate = {2026-08-23},
  
}

@inproceedings{guanRethinkingFairFederated2025,
  title = {Rethinking {{Fair Federated Learning}} from {{Parameter}} and {{Client View}}},
  booktitle = {The {{Thirty-ninth Annual Conference}} on {{Neural Information Processing Systems}}},
  author = {Guan, Kaiqi and Huang, Wenke and Guo, Xianda and Yuan, Yueyang and Yang, Bin and Ye, Mang},
  year = 2025,
  month = oct,
  urldate = {2026-08-18},
  langid = {english},
  
}

@article{hamidiAdaFedFairFederated2023,
  title = {{{AdaFed}}: {{Fair Federated Learning}} via {{Adaptive Common Descent Direction}}},
  shorttitle = {{{AdaFed}}},
  author = {Hamidi, Shayan Mohajer and Yang, En-Hui},
  year = 2023,
  month = aug,
  journal = {Transactions on Machine Learning Research},
  issn = {2835-8856},
  urldate = {2026-08-16},
  langid = {english},
  
}

@misc{hamidiDistributedQuasiNewtonMethod2025,
  title = {Distributed {{Quasi-Newton Method}} for {{Fair}} and {{Fast Federated Learning}}},
  author = {Hamidi, Shayan Mohajer and Ye, Linfeng},
  year = 2025,
  month = jan,
  number = {arXiv:2501.10877},
  eprint = {2501.10877},
  primaryclass = {cs.LG},
  publisher = {arXiv},
  doi = {10.48550/arXiv.2501.10877},
  urldate = {2026-08-10},
  archiveprefix = {arXiv},
  
}

@article{huangFairnessAccuracyHorizontal2022,
  title = {Fairness and Accuracy in Horizontal Federated Learning},
  author = {Huang, Wei and Li, Tianrui and Wang, Dexian and Du, Shengdong and Zhang, Junbo and Huang, Tianqiang},
  year = 2022,
  month = apr,
  journal = {Information Sciences},
  volume = {589},
  pages = {170--185},
  issn = {0020-0255},
  doi = {10.1016/j.ins.2021.12.102},
  urldate = {2026-08-21},
  
}

@article{huFederatedLearningMeets2022,
  title = {Federated {{Learning Meets Multi-Objective Optimization}}},
  author = {Hu, Zeou and Shaloudegi, Kiarash and Zhang, Guojun and Yu, Yaoliang},
  year = 2022,
  month = jul,
  journal = {IEEE Transactions on Network Science and Engineering},
  volume = {9},
  number = {4},
  pages = {2039--2051},
  issn = {2327-4697},
  doi = {10.1109/TNSE.2022.3169117},
  urldate = {2026-08-16},
  
}

@article{kairouzAdvancesOpenProblems2021,
  title = {Advances and {{Open Problems}} in {{Federated Learning}}},
  author = {Kairouz, Peter and McMahan, H. Brendan and Avent, Brendan and Bellet, Aur{\'e}lien and Bennis, Mehdi and Nitin Bhagoji, Arjun and Bonawitz, Kallista and Charles, Zachary and Cormode, Graham and Cummings, Rachel and D'Oliveira, Rafael G. L. and Eichner, Hubert and El Rouayheb, Salim and Evans, David and Gardner, Josh and Garrett, Zachary and Gasc{\'o}n, Adri{\`a} and Ghazi, Badih and Gibbons, Phillip B. and Gruteser, Marco and Harchaoui, Zaid and He, Chaoyang and He, Lie and Huo, Zhouyuan and Hutchinson, Ben and Hsu, Justin and Jaggi, Martin and Javidi, Tara and Joshi, Gauri and Khodak, Mikhail and Konecn{\'y}, Jakub and Korolova, Aleksandra and Koushanfar, Farinaz and Koyejo, Sanmi and Lepoint, Tancr{\`e}de and Liu, Yang and Mittal, Prateek and Mohri, Mehryar and Nock, Richard and {\"O}zg{\"u}r, Ayfer and Pagh, Rasmus and Qi, Hang and Ramage, Daniel and Raskar, Ramesh and Raykova, Mariana and Song, Dawn and Song, Weikang and Stich, Sebastian U. and Sun, Ziteng and Suresh, Ananda Theertha and Tram{\`e}r, Florian and Vepakomma, Praneeth and Wang, Jianyu and Xiong, Li and Xu, Zheng and Yang, Qiang and Yu, Felix X. and Yu, Han and Zhao, Sen},
  year = 2021,
  month = jun,
  journal = {Foundations and Trends\textregistered{} in Machine Learning},
  volume = {14},
  number = {1-2},
  pages = {1--210},
  issn = {1935-8237},
  doi = {10.1561/2200000083},
  urldate = {2026-08-18},
  
}

@inproceedings{karimireddySCAFFOLDStochasticControlled2020,
  title = {{{SCAFFOLD}}: {{Stochastic Controlled Averaging}} for {{Federated Learning}}},
  shorttitle = {{{SCAFFOLD}}},
  booktitle = {Proceedings of the 37th {{International Conference}} on {{Machine Learning}}},
  author = {Karimireddy, Sai Praneeth and Kale, Satyen and Mohri, Mehryar and Reddi, Sashank and Stich, Sebastian and Suresh, Ananda Theertha},
  year = 2020,
  month = nov,
  pages = {5132--5143},
  publisher = {PMLR},
  issn = {2640-3498},
  urldate = {2026-08-18},
  langid = {english},
  
}

@inproceedings{liDittoFairRobust2021,
  title = {Ditto: {{Fair}} and {{Robust Federated Learning Through Personalization}}},
  shorttitle = {Ditto},
  booktitle = {Proceedings of the 38th {{International Conference}} on {{Machine Learning}}},
  author = {Li, Tian and Hu, Shengyuan and Beirami, Ahmad and Smith, Virginia},
  year = 2021,
  month = jul,
  pages = {6357--6368},
  publisher = {PMLR},
  issn = {2640-3498},
  urldate = {2026-08-18},
  langid = {english},
  
}

@inproceedings{liFairResourceAllocation2019,
  title = {Fair {{Resource Allocation}} in {{Federated Learning}}},
  booktitle = {International {{Conference}} on {{Learning Representations}}},
  author = {Li, Tian and Sanjabi, Maziar and Beirami, Ahmad and Smith, Virginia},
  year = 2019,
  month = sep,
  urldate = {2026-08-16},
  langid = {english},
  
}

@misc{liFederatedOptimizationHeterogeneous2020,
  title = {Federated {{Optimization}} in {{Heterogeneous Networks}}},
  author = {Li, Tian and Sahu, Anit Kumar and Zaheer, Manzil and Sanjabi, Maziar and Talwalkar, Ameet and Smith, Virginia},
  year = 2020,
  month = apr,
  number = {arXiv:1812.06127},
  eprint = {1812.06127},
  primaryclass = {cs.LG},
  publisher = {arXiv},
  doi = {10.48550/arXiv.1812.06127},
  urldate = {2026-08-18},
  archiveprefix = {arXiv},
  
}

@inproceedings{liTiltedEmpiricalRisk2020,
  title = {Tilted {{Empirical Risk Minimization}}},
  booktitle = {International {{Conference}} on {{Learning Representations}}},
  author = {Li, Tian and Beirami, Ahmad and Sanjabi, Maziar and Smith, Virginia},
  year = 2020,
  month = oct,
  urldate = {2026-08-16},
  langid = {english},
  
}

@inproceedings{mcmahanCommunicationEfficientLearningDeep2017,
  title = {Communication-{{Efficient Learning}} of {{Deep Networks}} from {{Decentralized Data}}},
  booktitle = {Proceedings of the 20th {{International Conference}} on {{Artificial Intelligence}} and {{Statistics}}},
  author = {McMahan, Brendan and Moore, Eider and Ramage, Daniel and Hampson, Seth and y Arcas, Blaise Aguera},
  year = 2017,
  month = apr,
  pages = {1273--1282},
  publisher = {PMLR},
  issn = {2640-3498},
  urldate = {2026-08-16},
  langid = {english},
  
}

@misc{mohriAgnosticFederatedLearning2019,
  title = {Agnostic {{Federated Learning}}},
  author = {Mohri, Mehryar and Sivek, Gary and Suresh, Ananda Theertha},
  year = 2019,
  month = feb,
  number = {arXiv:1902.00146},
  eprint = {1902.00146},
  primaryclass = {cs.LG},
  publisher = {arXiv},
  doi = {10.48550/arXiv.1902.00146},
  urldate = {2026-08-16},
  archiveprefix = {arXiv},
  
}

@article{panFedLFLayerWiseFair2024,
  title = {{{FedLF}}: {{Layer-Wise Fair Federated Learning}}},
  shorttitle = {{{FedLF}}},
  author = {Pan, Zibin and Li, Chi and Yu, Fangchen and Wang, Shuyi and Wang, Haijin and Tang, Xiaoying and Zhao, Junhua},
  year = 2024,
  month = mar,
  journal = {Proceedings of the AAAI Conference on Artificial Intelligence},
  volume = {38},
  number = {13},
  pages = {14527--14535},
  issn = {2374-3468},
  doi = {10.1609/aaai.v38i13.29368},
  urldate = {2026-08-10},
  copyright = {Copyright (c) 2024 Association for the Advancement of Artificial Intelligence},
  langid = {english},
  
}

@article{panFedMDFGFederatedLearning2023,
  title = {{{FedMDFG}}: {{Federated Learning}} with {{Multi-Gradient Descent}} and {{Fair Guidance}}},
  shorttitle = {{{FedMDFG}}},
  author = {Pan, Zibin and Wang, Shuyi and Li, Chi and Wang, Haijin and Tang, Xiaoying and Zhao, Junhua},
  year = 2023,
  month = jun,
  journal = {Proceedings of the AAAI Conference on Artificial Intelligence},
  volume = {37},
  number = {8},
  pages = {9364--9371},
  issn = {2374-3468},
  doi = {10.1609/aaai.v37i8.26122},
  urldate = {2026-08-16},
  copyright = {Copyright (c) 2023 Association for the Advancement of Artificial Intelligence},
  langid = {english},
  
}

@inproceedings{reddiAdaptiveFederatedOptimization2020,
  title = {Adaptive {{Federated Optimization}}},
  booktitle = {International {{Conference}} on {{Learning Representations}}},
  author = {Reddi, Sashank J. and Charles, Zachary and Zaheer, Manzil and Garrett, Zachary and Rush, Keith and Kone{\v c}n{\'y}, Jakub and Kumar, Sanjiv and McMahan, Hugh Brendan},
  year = 2020,
  month = oct,
  urldate = {2026-08-29},
  langid = {english},
  
}

@article{shenMultiobjectiveFederatedLearning2025,
  title = {Multi-Objective Federated Learning: {{Balancing}} Global Performance and Individual Fairness},
  shorttitle = {Multi-Objective Federated Learning},
  author = {Shen, Yuhao and Xi, Wei and Cai, Yunyun and Fan, Yuwei and Yang, He and Zhao, Jizhong},
  year = 2025,
  month = jan,
  journal = {Future Generation Computer Systems},
  volume = {162},
  pages = {107468},
  issn = {0167-739X},
  doi = {10.1016/j.future.2024.07.046},
  urldate = {2026-08-18},
  
}

@article{shiFairnessAwareFederatedLearning2024,
  title = {Towards {{Fairness-Aware Federated Learning}}},
  author = {Shi, Yuxin and Yu, Han and Leung, Cyril},
  year = 2024,
  month = sep,
  journal = {IEEE Transactions on Neural Networks and Learning Systems},
  volume = {35},
  number = {9},
  pages = {11922--11938},
  issn = {2162-2388},
  doi = {10.1109/TNNLS.2023.3263594},
  urldate = {2026-08-16},
}

@misc{varamNotAllObjectives2026,
  title = {Not {{All Objectives Are Born Equal}}: {{Priority-Constrained Descent}} for {{Hierarchical Multi-Objective Optimization}}},
  shorttitle = {Not {{All Objectives Are Born Equal}}},
  author = {Varam, Dara and Alhajri, Mohamed I.},
  year = 2026,
  month = jun,
  number = {arXiv:2606.29521},
  eprint = {2606.29521},
  primaryclass = {cs.LG},
  publisher = {arXiv},
  doi = {10.48550/arXiv.2606.29521},
  urldate = {2026-08-21},
  archiveprefix = {arXiv},
}

@misc{wangCRAFTConflictResolvedAggregation2026,
  title = {{{CRAFT}}: {{Conflict-Resolved Aggregation}} for {{Federated Training}}},
  shorttitle = {{{CRAFT}}},
  author = {Wang, Ziqi and Liu, Qiang and Thuerey, Nils},
  year = 2026,
  month = may,
  number = {arXiv:2605.21317},
  eprint = {2605.21317},
  primaryclass = {cs.LG},
  publisher = {arXiv},
  doi = {10.48550/arXiv.2605.21317},
  urldate = {2026-08-16},
  archiveprefix = {arXiv},
}

@misc{wangFederatedLearningFair2021,
  title = {Federated {{Learning}} with {{Fair Averaging}}},
  author = {Wang, Zheng and Fan, Xiaoliang and Qi, Jianzhong and Wen, Chenglu and Wang, Cheng and Yu, Rongshan},
  year = 2021,
  month = jun,
  number = {arXiv:2104.14937},
  eprint = {2104.14937},
  primaryclass = {cs.LG},
  publisher = {arXiv},
  doi = {10.48550/arXiv.2104.14937},
  urldate = {2026-08-18},
  archiveprefix = {arXiv},
}

@misc{xiaoFashionMNISTNovelImage2017,
  title = {Fashion-{{MNIST}}: A {{Novel Image Dataset}} for {{Benchmarking Machine Learning Algorithms}}},
  shorttitle = {Fashion-{{MNIST}}},
  author = {Xiao, Han and Rasul, Kashif and Vollgraf, Roland},
  year = 2017,
  month = sep,
  number = {arXiv:1708.07747},
  eprint = {1708.07747},
  primaryclass = {cs.LG},
  publisher = {arXiv},
  doi = {10.48550/arXiv.1708.07747},
  urldate = {2026-09-03},
  archiveprefix = {arXiv},
}

@article{krizhevsky2009learning,
  title = {Learning Multiple Layers of Features from Tiny Images},
  author = {Krizhevsky, Alex and Hinton, Geoffrey and others},
  year = 2009,
  publisher = {Toronto, ON, Canada}
}

@inproceedings{acarFederatedLearningBased2020,
  title = {Federated {{Learning Based}} on {{Dynamic Regularization}}},
  booktitle = {International {{Conference}} on {{Learning Representations}}},
  author = {Acar, Durmus Alp Emre and Zhao, Yue and Matas, Ramon and Mattina, Matthew and Whatmough, Paul and Saligrama, Venkatesh},
  year = 2020,
  month = oct,
  urldate = {2026-09-04},
  langid = {english},
}

@article{jain1984quantitative,
  title={A quantitative measure of fairness and discrimination},
  author={Jain, Rajendra K and Chiu, Dah-Ming W and Hawe, William R and others},
  journal={Eastern Research Laboratory, Digital Equipment Corporation, Hudson, MA},
  volume={21},
  number={1},
  pages={2022--2023},
  year={1984}
}
